\documentclass{amsart}

\usepackage[english]{babel}
\usepackage{enumerate}
\usepackage{amsthm,amssymb}
\usepackage{float}
\usepackage{xcolor}
\usepackage{mathrsfs}
\usepackage{cases}
\usepackage{resizegather}

%% Useful packages
\usepackage{multirow}
\usepackage{amsmath}
\usepackage{graphicx}
\usepackage{amssymb}
\usepackage{mathtools}
\usepackage{url}
\newtheorem{theorem}{Theorem}[section]

\newtheorem{definition}[theorem]{Definition}
\theoremstyle{definition}

\newtheorem{exmp}[theorem]{Example}
\newtheorem{alg}[theorem]{Algorithm}

\newcommand{\rvline}{\hspace*{-\arraycolsep}\vline\hspace*{-\arraycolsep}}
\setcounter{equation}{0} \setcounter{subsection}{0}
\numberwithin{equation}{section}

\def\calF{\mathcal{F}}
\def\calC{\mathcal{C}}
\def\calM{\mathcal{M}}

\def\bbR{\mathbb{R}}

\def\bold{\mathbf}
\def\bfp{\mathbf{p}}
\def\bfv{\mathbf{v}}

\def\bfy{\mathbf{y}}
\def\bfz{\mathbf{z}}
\def\bfu{\mathbf{u}}

\def\T{\mathrm{T}}

\title[Higher Order Correlation Analysis for MVL]{Higher Order Correlation Analysis for Multi-View Learning}
\author[J. Nie]{Jiawang Nie}
\author[L. Wang] {Li Wang}
\author[Z. Zheng]{Zequn Zheng}
%\keywords{ Structured Prediction, Multiple Template learning, Group Sparsity, Sequence Labeling, Dependency Parsing}
\thanks{
Jiawang Nie and Zequn Zheng are from
Department of Mathematics, University of California San Diego,
9500 Gilman Drive, La Jolla, CA, USA, 92093. Email: njw@math.ucsd.edu, zez084@ucsd.edu.\\
Li Wang is from the Department of Mathematics, University of Texas at Arlington,
411 South Nedderman Drive, Arlington, TX, 76019. Email:  li.wang@uta.edu.
}

\subjclass[2010]{15A69,62H30,62H35,68T10,68T30}

\begin{document}
\maketitle

\begin{abstract}
Multi-view learning is frequently used in data science.
The pairwise correlation maximization is a classical approach
for exploring the consensus of multiple views.
Since the pairwise correlation is inherent for two views,
the extensions to more views can be diversified and
the intrinsic interconnections among views are generally lost.
To address this issue, we propose to maximize higher order correlations.
This can be formulated as a low rank approximation problem
with the higher order correlation tensor of multi-view data.
We use the generating polynomial method to solve
the low rank approximation problem.
Numerical results on real multi-view data demonstrate that
this method consistently outperforms prior existing methods.
\end{abstract}

\section{Introduction}Multi-view learning is a frequently used paradigm for multi-view data,
which has broad applications.
Generally, multi-view data contains sets of samples,
each of which is depicted by a different characteristic.
For instance, an image can be described by different feature descriptors
such as color, texture and shape.
A web page contains text and images, as well as hyperlinks to other web pages.
Due to heterogeneous features extracted from each view,
multi-view learning becomes popular in data science
to reduce the heterogeneous gap among multiple views
by maximizing the consensus of multiple views in some common latent space.

There are various multi-view learning methods in the prior work.
Among them, the canonical correlation analysis (CCA),
originally introduced for measuring the linear correlation
between two sets of variables \cite{harold1936relations},
has been the workhorse for learning a common latent space
between two views \cite{yang2019survey}.
It is extended to various learning scenarios,
such as multiple views \cite{luo2015tensor,nielsen2002multiset},
nonlinear and sparse  representations \cite{andrew2013deep,hardoon2011sparse}.
Its importance has been well demonstrated in many scientific domains \cite{uurtio2017tutorial}.
As a measurement, correlation is usually defined for two sets of variables.
The extension from two to more sets can be diversified.
See \cite{nielsen2002multiset} for various combinations of objectives and constraints.
Pairwise correlation is a common criterion for
capturing the intrinsic interconnections of two views.
But for more than two views, the intrinsic interconnections
among all views are lost.

To overcome the above issue of pairwise correlations,
higher order tensor correlation methods are generally used.
They directly model interconnections as tensors.
The tensor canonical correlation analysis (TCCA)
method is introduced in \cite{luo2015tensor}
for maximizing the higher order tensor correlation.
It not only generalizes the correlation between two views
but also explores higher order correlations for more views.
The maximization of higher order tensor correlations often use
the alternating least squares (ALS) method
\cite{comon2009tensor,KoBa09}.
It is suboptimal for solving the best rank-$r$
tensor approximation problem \cite{luo2015tensor}.
This is because the set of tensors whose ranks are less than or equal to
$r$ is usually not closed. The ALS is convenient for implementation,
but its performance is generally not reliable.
When $r = 1$, the problem is reduced to the best rank-1 approximation.
Frequently used methods are higher order power iterations \cite{de2000best},
semidefinite relaxations \cite{CDN14,nie2014semidefinite},
and SVD-based algorithms \cite{guan2018convergence}.
For a generic tensor $\mathcal{F}$, the best rank-1 approximation is unique \cite{friedland2014number}. For $r>1$, there exist various methods
for computing rank-$r$ approximations, see \cite{comon2009tensor,phan2013fast,SvBdL13}.
Many of these methods are based on ALS.
Their performance is not very reliable.
Generally, only critical points can be guaranteed. We refer to \cite{comon2009tensor,comon2011sparse,LRSTA17}
for recent work on low rank tensor approximations.

In this paper, we propose a new method
for solving the higher order tensor correlation maximization problem.
The generating polynomial method is introduced to
compute low rank approximating tensors with promising performance from
the higher order correlation tensor of multi-view input data.
Consequently, the proposed method can achieve better performance than
earlier methods based on the ALS, since a good initial point
can be found by the generating polynomial method.
The proposed method is tested on two real data sets
for multi-view feature extraction.
The computational results show that our proposed method
consistently outperforms the prior existing methods.

The paper is organized as follows.
Section~\ref{sc:pre} gives some preliminaries about tensor computations.
We introduce the generating polynomial method
for low rank tensor approximation in Section~\ref{sc:GP}.
The formulation of low rank tensor approximation for
the higher order tensor correlation maximization problem
is given in Section~\ref{sc:LRTA}.
An algorithm for solving the formulated problem
is given in Section~\ref{sc:alg}.
The numerical experiments on two real multi-view data sets
are given in Section~\ref{sc:num}.

\section{Preliminary}
\label{sc:pre}

\subsection*{Notation}
The symbol $\mathbb{N}$ (resp., $\mathbb{R}$, $\mathbb{C}$)
denotes the set of nonnegative integers (resp., real, complex numbers).
For an integer $r>0$, denote the set $[r] \coloneqq \{1, \ldots, r\}$.
Uppercase letters (e.g., $A$) denote matrices,
$(A)_{i,j}$ denotes the $(i,j)$th entry of the matrix $A$,
and Curl letters (e.g., $\mathcal{F}$) denote tensors.
For a complex matrix $A$, $A^T$ denotes its transpose
and $A^*$ denotes its conjugate transpose.
The $col(A)$ denotes the column space of $A$.
Bold lower case letters (e.g., $\bold{v}$) denote vectors,
and $(\bold{v})_i$ denotes the $i$th entry of $\bold{v}$.
The $\mbox{diag}(\bold{v})$ denotes the square diagonal matrix whose
diagonal entries are given by the entries of $\bold{v}$.
For a matrix $A$, the subscript notation $A_{:,j}$ and $A_{i,:}$
respectively denotes its $j$th column and $i$th row.
For a vector $\bold{v}$, the subscript $\bold{v}_{s:t}$
denotes the subvector of $\bold{v}$ whose label is from $s$ to $t$.
Similar subscript notation is used for tensors.

Let $\mathbb{F}$ be a field (either the real field $\mathbb{R}$
or the complex field $\mathbb{C}$).
Let $m$ and $n_1, \ldots, n_m$ be positive integers.
A tensor of order $m$ and dimension $(n_1,\ldots,n_m)$
can be represented by an array $\calF$ that is labelled
by an integeral tuple $(i_1,\ldots,i_m)$,
with $1 \leq i_j \leq n_j,\, j=1,\ldots, m$, such that
\begin{align}
	\calF = (\calF_{i_1,\ldots,i_m})_{1\leq i_1 \leq n_1, \ldots, 1 \leq i_m \leq n_m}.
\end{align}
The space of all such tensors with entries in the field $\mathbb{F}$ is denoted as $\mathbb{F}^{n_1\times\cdots\times n_m}$.
The integer $m$ is the order of $\calF$.
The Hilbert-Schmidt norm of $\calF$ is
\[
\| \calF \| = \sqrt{
	\sum_{\substack{ 1\leq i_j \leq n_j, 1 \le j \le m } }
	|\calF_{i_1,\ldots,i_m}|^2
} .
\]

For vectors $\bfv_1 \in \mathbb{F}^{n_1}, \ldots, \bfv_m \in \mathbb{F}^{n_m}$,
their outer product $\bfv_1 \otimes \ldots \otimes \bfv_m$
is the tensor in $\mathbb{F}^{n_1\times\cdots\times n_m}$ such that
\[
(\bfv_1 \otimes \cdots \otimes \bfv_m)_{i_1,\ldots,i_m} =
(\bfv_1)_{i_1} \cdots (\bfv_m)_{i_m}
\]
for all labels $i_1, \ldots, i_m$ in the range.
A tensor in the form $\bfv_1 \otimes \ldots \otimes \bfv_m$
is called a rank-$1$ tensor. For each
$\calF \in \mathbb{F}^{n_1\times\cdots\times n_m}$,
there exist tuples of vectors $(\bfv^{s,1}, \ldots, \bfv^{s,m})$,
$s=1,\ldots,r$, with $\bfv^{s,j} \in \mathbb{F}^{n_j}$, such that
\begin{align}   \label{eq:ransum_pure}
	\calF = \sum_{s=1}^r \bfv^{s,1} \otimes \cdots \otimes \bfv^{s, m}.
\end{align}
The smallest such $r$ is the rank of $\calF$ over the field $\mathbb{F}$,
for which we denote $\text{rank}_{\mathbb{F}}(\mathcal{F})$.
If $\text{rank}_{\mathbb{F}}(\mathcal{F}) = r$,
the equation (\ref{eq:ransum_pure}) is called a rank-$r$ decomposition.
In the literature, $\text{rank}_{\mathbb{C}}(\mathcal{F})$
is also called the candecomp-parafac (CP) rank of $\mathcal{F}$
and (\ref{eq:ransum_pure}) is called a CP decomposition.
We refer to \cite{Land12,Lim13} for tensor theory and refer to \cite{BreVan18,dLa06,dLMV04,Domanov2014,larsen2020practical,GPSTD,SvBdL13,tensorlab}
for tensor decomposition methods.
Recent applications of tensor decompositions can be found in
\cite{guo2021learning,KoBa09,nie2020hermitian}.
Tensors are closely related to polynomial optimization
\cite{CDN14,FNZ18,nie2014semidefinite,NYZ18}.

Tensors can be naturally used to characterize multidimensional data in applications,
such as 3D images, panel data (subjects $\times$
variables $\times$ time $\times$ location),
including multi-channel EEG and fMRI data in
Neuroscience \cite{dao2020multi,davidson2013network},
higher order multivariate portfolio moments \cite{brandi2020unveil},
and multi-view datasets \cite{luo2015tensor}.
The traditional data analysis approach based on representations
by vectors or matrices has to reshape multidimensional data
into the vector/matrix format. However, such a transformation not only destroys
the intrinsic interconnections between the data points,
but also gives exponentially growing number of estimated parameters.

A tensor decomposition can be represented by matrices.
If a tensor $\mathcal{F}$ has the decomposition
\[
\mathcal{F} = \sum_{s=1}^{r}
\bold{u}^{s,1} \otimes \cdots \otimes \bold{u}^{s,m},
\]
we can denote the matrices
\[
U^{(i)} \, = \, [\bold{u}^{1,i},...,\bold{u}^{r,i}],~
i =1, \ldots,m.
\]
We call such $U^{(i)}$ the $i$th decomposing matrix for $\mathcal{F}$.
For convenience of notation, we denote that
\begin{align*}
	U^{(1)} \circ \cdots \circ U^{(m)} =
	\sum_{i=1}^r (U^{(1)})_{:,i} \otimes\ldots \otimes (U^{(m)})_{:,i}.
\end{align*}
In the above, $(U^{(m)})_{:,i}$ stands for the $i$th column of $U^{(m)}$.
For two matrices $A$ and $B$, with $A = (A_{ij}) \in \mathbb{F}^{k \times n}$
and $B = [\bold{b}_1,\ldots,\bold{b}_n ] \in \mathbb{F}^{p \times n}$,
their Khatri-Rao product $\odot$ is the matrix
\[
A \odot B \coloneqq
\begin{bmatrix}
	A_{11}\bold{b}_1 & \dots & A_{1n}\bold{b}_n \\
	\vdots & \ddots & \vdots \\
	A_{k1}\bold{b}_1 & \dots & A_{kn}\bold{b}_n \\
\end{bmatrix}.
\]
For a given matrix $V \in \mathbb{C}^{p\times n_t}$,
we define the matrix-tensor product
\[
\hat{\mathcal{F}}  \coloneqq   V \times_t \mathcal{F}  \in
\mathbb{C}^{n_1\times ...\times n_{t-1} \times p \times n_{t+1} \times ...\times n_m}
\]
such that the $i$th slice of $\hat{\mathcal{F}}$ is
\[
\hat{\mathcal{F}}_{i_1,...,i_{t-1},:,i_{t+1},...,i_m} = V \mathcal{F}_{i_1,...,i_{t-1},:,i_{t+1},...,i_m}.
\]
In particular, if $V$ is a vector $\mathbf{v} \in \mathbb{F}^{n_t}$,
then the vector-tensor product
\[
\bfv^T  \times_t  \mathcal{F}
\in \mathbb{C}^{n_1\times ...\times n_{t-1} \times 1 \times n_{t+1} \times ...\times n_m}
\]
is similarly defined. Note that the order of
$\bfv^T  \times_t  \mathcal{F}$ drops by one.

The low rank tensor approximation (LRTA) problem is
to approximate a given tensor by a low rank one.
The LRTA is equivalent to solving a nonlinear least square problem.
For a given tensor $\mathcal{F}\in  \mathbb{F}^{n_1\times\cdots\times n_m}$,
and a given rank $r$, the LRTA is to find $r$ tuples
\[
\bfv^{(s)} \coloneqq (\bfv^{s,1}, \ldots, \bfv^{s,m}) \in
\mathbb{F}^{n_1}\times \cdots\times \mathbb{F}^{n_m},
\quad s=1, \ldots, r,
\]
which gives a minimizer to the following nonlinear least square problem
\begin{align}
	\min\limits_{{\bfv^{(1)},\cdots,\bfv^{(r)}} }\big\|\mathcal{F} -
	\sum\limits_{s = 1}^r \bfv^{s,1} \otimes \cdots \otimes \bfv^{s, m}\big\|^2. \label{nonlinear:least:square:F}
\end{align}

\section{Generating Polynomials}
\label{sc:GP}

This section shows how to use generating polynomials
to compute tensor decompositions. Without loss of generality,
we assume the tensor dimensions are decreasing:
\[
n_1\geq n_2 \geq \ldots \geq n_m.
\]
We consider tensors with rank $r\leq n_1$.
Denote indeterminate variables
\[
\bold{x_1} =(x_{1,1},...x_{1,n_1}),\,\,
\bold{x_2}=(x_{2,1},...x_{2,n_2}), \,\, \ldots,
\bold{x_m}=(x_{m,1},...x_{m,n_m}).
\]
The $(i_1,i_2,...,i_m)$th entry of a tensor $\mathcal{F}$
can be labelled by a monomial $x_{1,i_1}x_{2,i_2}...x_{m,i_m}$. Let
\begin{equation}
	\begin{array}{rcl}
		\mathbb{M} & \coloneqq &
		\big\{ x_{1,i_1}...x_{m,i_m} \; | \;1\leq i_j \leq n_j, 1 \leq j \leq m \big\}, \\
		\mathcal{M} & \coloneqq & {\rm span}\{ \mathbb{M} \} .
	\end{array}
\end{equation}
For a subset $J \subseteq \{1,2,...,m \}$, we denote that
\begin{equation} 	\label{m_j_def}
	\begin{array}{rcl}
		J^c  & \coloneqq & \{1,2,...,m \} \backslash J,  \\
		\mathbb{M}_J  & \coloneqq & \big\{ x_{1,i_1}...x_{m,i_m} \; | \;
		{ x_{j,i_j}=1}, \;  \;  j\in J^c \big\}, \\
		\mathcal{M}_J & \coloneqq & {\rm span} \{\mathbb{M}_J\}.
	\end{array}
\end{equation}
Note that $(i_1,\ldots,i_m)$ is uniquely determined by the monomial
$x_{1,i_1} \cdots x_{m,i_m}$.
So a tensor $\mathcal{F}\in \mathbb{C}^{n_1\times \ldots \times n_m}$
can be equivalently labelled as
\begin{align} \label{label:monomial}
	\mathcal{F}_{x_{1,i_1}\ldots x_{m,i_m}}   \coloneqq   \mathcal{F}_{i_1,\ldots,i_m}.
\end{align}
With the above new labelling, we define the bi-linear operation
$\langle \cdot, \cdot \rangle$
between $\mathcal{M}_J$ and $\mathbb{C}^{n_1,\ldots,n_m}$ as
\begin{align}  	\label{polyten}
	\langle \sum_{\mu \in \mathbb{M}}c_{\mu}\mu,\mathcal{F}\rangle
	\coloneqq  \sum_{\mu \in \mathbb{M}}c_{\mu}\mathcal{F}_{\mu}.
\end{align}
In the above, each $c_{\mu}$ is a scalar
and $\mathcal{F}$ is labelled by monomials as in (\ref{label:monomial}).

\begin{definition}\label{def:gp}
	For a subset $J \subseteq \{1,2,...,m\}$ and a tensor
	$\mathcal{F} \in \mathbb{C}^{n_1 \times \cdots \times n_m}$,
	a polynomial $p \in \mathcal{M}_J$
	is called a generating polynomial for $\mathcal{F}$ if
	\begin{equation}  		\label{inner_p}
		\langle pq,\mathcal{F} \rangle =0 \quad
		\mbox{for all} \, \, q \in \mathbb{M}_{J^c} .
	\end{equation}
\end{definition}

The following is an example of generating polynomials.

\begin{exmp}
	Consider the cubic order tensor
	$\mathcal{F} \in \mathbb{C}^{3 \times 3 \times 3}$ given as
	\begin{gather*}
		\begin{bmatrix}
			\mathcal{F}_{:,:,1} & \rvline & \mathcal{F}_{:,:,2} & \rvline & \mathcal{F}_{:,:,3} \\
		\end{bmatrix}=
		\begin{bmatrix}
			\begin{matrix}-10 & 48 & 70\\
				-10 & -64 & -50 \\
				-5 & 10 & 20 \\
			\end{matrix} & \rvline & \begin{matrix}
				22 & -16 & -58 \\
				-42 & 0 & 78 \\
				3 & -6 & -12 \\\end{matrix} &\rvline & \begin{matrix} -1 & 44 & 49 \\
				-29 & -68 & -19 \\
				-4 & 8 & 16 \end{matrix} \\
		\end{bmatrix}.
	\end{gather*}
	{For $J=\{ 1,2 \}$,
		note that $\begin{bmatrix}6 & 3 & 2 & 1 \end{bmatrix}$ is orthogonal to
		\[
		\begin{bmatrix}
			\mathcal{F}_{1,1,i_3} & \mathcal{F}_{1,2,i_3} &
			\mathcal{F}_{2,1,i_3} & \mathcal{F}_{2,2,i_3}
		\end{bmatrix}
		\]
		for $i_3=1,2,3$.
		So $\begin{bmatrix}6 & 3 & 2 & 1 \end{bmatrix}$
		is the coefficient vector of a generating polynomial.}
	The following is a generating polynomial for $\mathcal{F}$:
	{
		\[
		p \coloneqq (3x_{1,1}+x_{1,2})(2x_{2,1}+x_{2,2}).
		\]}Note that $p \in \mathcal{M}_{\{1,2 \}}$
	and for each $i_3 = 1, 2, 3$
	\begin{align*}
		p \cdot x_{3,i_3}  =  (3x_{1,1}+x_{1,2})(2x_{2,1}+x_{2,2})  x_{3,i_3}.
	\end{align*}
	One can check that for each  $i_3 = 1, 2, 3$
	\[
	6\mathcal{F}_{1,1,i_3}+3\mathcal{F}_{1,2,i_3}+
	2\mathcal{F}_{2,1,i_3}+\mathcal{F}_{2,2,i_3}=0.
	\]
	So, $\langle pq,\mathcal{F} \rangle =0$
	for all $q \in \mathcal{M}_{\{3\}}$,
	hence $p$ is a generating polynomial.
\end{exmp}

Suppose the rank $r \le n_1$ is given.
For convenience of notation, denote the label set
\begin{align}  \label{set:J}
	J  \coloneqq  \{(i,j,k):1\leq i \leq r, ~2 \leq j \leq m, ~2\leq k \leq n_j \}.
\end{align}
For a matrix $G \in \mathbb{C}^{[r]\times J}$
and a triple $\tau =(i,j,k) \in J$, define the bi-linear polynomial
\begin{align}  	\label{phi_def}
	\phi[G,\tau](x) \coloneqq  \sum_{\ell=1}^{r}
	G(\ell,\tau)x_{1,\ell}x_{j,1}-x_{1,i}x_{j,k}
	\,\, \in \, \mathcal{M}_{\{1,j\}}.
\end{align}
The rows of $G$ are labelled by $\ell=1,2,...,r$
and the columns of $G$ are labelled by $\tau \in J$.
We are interested in $G$ such that
$\phi[G,\tau]$ is a generating polynomial for a tensor
$\mathcal{F} \in \mathbb{C}^{n_1 \times n_2 \times \ldots \times n_m}$.
This requires that
\begin{align*}
	\langle \phi[G,\tau] \cdot \mu ,\mathcal{F} \rangle=0 \,
	\quad \mbox{for all} \, \, \mu \in \mathbb{M}_{\{1,j\}^c}.
\end{align*}
The above is equivalent to the equation
($\mathcal{F}$ is labelled as in (\ref{label:monomial}))
\begin{equation}  	\label{linear_eq_g}
	\sum_{\ell=1}^{r} G(\ell,\tau)\mathcal{F}_{x_{1,\ell} \cdot \mu}
	=\mathcal{F}_{x_{1,i}  x_{j,k}  \cdot \mu} .
\end{equation}

\begin{definition}
	If \eqref{linear_eq_g} holds for all $\tau \in J$,
	then $G$ is called a generating matrix for $\mathcal{F}$.
\end{definition}

For given $G$, $j \in \{ 2, \ldots, m \}$ and $k \in \{2,\ldots,n_j \}$,
we denote the matrix
\begin{equation}
	M^{j,k}[G] \coloneqq \begin{bmatrix}
		G(1,(1,j,k)) & G(2,(1,j,k)) & \dots & G(r,(1,j,k)) \\
		G(1,(2,j,k)) & G(2,(2,j,k)) & \dots & G(r,(2,j,k)) \\
		\vdots & \vdots & \ddots & \vdots \\
		G(1,(r,j,k)) & G(2,(r,j,k)) & \dots & G(r,(r,j,k)) \\
	\end{bmatrix} .
	\label{mjk}
\end{equation}
For each $j,k$, define the matrices
\begin{equation}
	\label{def:A_b}
	\begin{cases}
		~~A[\mathcal{F},j] ~~~~~   \coloneqq &
		\Big(\mathcal{F}_{x_{1,\ell}\cdot \mu}
		\Big)_{\mu \in \mathbb{M}_{\{1,j\}^c}, 1\leq \ell \leq r} , \\
		~~B[\mathcal{F},j,k] ~~ \coloneqq &
		\Big(\mathcal{F}_{x_{1,\ell} \cdot x_{j,k}\cdot \mu}
		\Big)_{\mu \in \mathbb{M}_{\{1,j\}^c}, 1\leq \ell \leq r} .	
	\end{cases}
\end{equation}
Then the equation \eqref{linear_eq_g} is equivalent to
\begin{align}   	\label{linear_eq}
	A[\mathcal{F},j] (M^{j,k}[G])^T = B[\mathcal{F},j,k].
\end{align}
The following is a useful property
for the matrices $M^{j,k}[G]$.
\begin{theorem}  \label{decom_to_Mjk}
	Suppose $\mathcal{F}=\sum_{s=1}^r
	\bold{u}^{s,1} \otimes ... \otimes \bold{u}^{s,m}$
	with vectors $\bold{u}^{s,j}\in \mathbb{C}^{n_j}$.
	If $r \leq n_1$, $(\bold{u}^{s,2})_1...(\bold{u}^{s,m})_1 \neq 0$,
	and the first $r$ rows of the first decomposing matrix
	\[
	U^{(1)} \coloneqq [\bold{u}^{1,1} \, \, \cdots \,\,  \bold{u}^{r,1} ]
	\]
	are linearly independent,
	then there exists a $G$ satisfying \eqref{linear_eq}
	and satisfying (for all $j \in \{ 2, \ldots, m \}$,
	$k \in \{2,\ldots,n_j \}$ and $s=1,\ldots,r$)
	\begin{align}  	\label{commuting_eq}
		M^{j,k}[G]\cdot (\bold{u}^{s,1})_{1:r}=
		(\bold{u}^{s,j})_k\cdot (\bold{u}^{s,1})_{1:r}.
	\end{align}
\end{theorem}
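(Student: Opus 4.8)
The plan is to read the commuting relations \eqref{commuting_eq} as an eigenvalue condition that, under the hypotheses, determines each matrix $M^{j,k}[G]$ uniquely, and then to deduce the generating-matrix equations \eqref{linear_eq} from the rank-$r$ structure of $\mathcal{F}$ by direct matrix algebra. First I would normalize. Since $(\bold{u}^{s,2})_1\cdots(\bold{u}^{s,m})_1\neq 0$, within each rank-one term I may absorb the scalar $\prod_{j\geq 2}(\bold{u}^{s,j})_1$ into $\bold{u}^{s,1}$ and replace $\bold{u}^{s,j}$ by $\bold{u}^{s,j}/(\bold{u}^{s,j})_1$ for $j\geq 2$; this leaves $\mathcal{F}$ and the linear independence of the first $r$ rows of $U^{(1)}$ unchanged, and arranges $(\bold{u}^{s,j})_1=1$ for all $j\geq 2$ and all $s$. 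Write $\bold{p}^s := (\bold{u}^{s,1})_{1:r}$ and $P := [\bold{p}^1\ \cdots\ \bold{p}^r]$. By hypothesis the first $r$ rows of $U^{(1)}$ are linearly independent, so $P$ is invertible and $\{\bold{p}^1,\dots,\bold{p}^r\}$ is a basis of $\mathbb{C}^r$.

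Equation \eqref{commuting_eq} asks exactly that each $\bold{p}^s$ be an eigenvector of $M^{j,k}[G]$ with eigenvalue $(\bold{u}^{s,j})_k$. Because the $\bold{p}^s$ form a basis, I would simply define, for each $j\in\{2,\dots,m\}$ and $k\in\{2,\dots,n_j\}$,
\[
M^{j,k} := P\,\Lambda_{j,k}\,P^{-1},\qquad \Lambda_{j,k} := \mbox{diag}\big((\bold{u}^{1,j})_k,\dots,(\bold{u}^{r,j})_k\big),
\]
and then read off $G$ through the correspondence $(M^{j,k}[G])_{i,\ell}=G(\ell,(i,j,k))$ coming from \eqref{mjk}, i.e. set $G(\ell,(i,j,k)):=(M^{j,k})_{i,\ell}$. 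By construction $M^{j,k}[G]=M^{j,k}$, so \eqref{commuting_eq} holds.

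It then remains to verify \eqref{linear_eq}. Here I would expand the matrices in \eqref{def:A_b} using the decomposition and the normalization: for $\mu\in\mathbb{M}_{\{1,j\}^c}$ labelled by indices $(i_t)_{t\neq 1,j}$, set $w^s_\mu:=\prod_{t\neq 1,j}(\bold{u}^{s,t})_{i_t}$ and collect these into a matrix $W^{(j)}$ with columns indexed by $s$. Tracking the monomial labelling \eqref{label:monomial}, the $x_{j,1}$ factor occurring in $A[\mathcal{F},j]$ contributes the value $(\bold{u}^{s,j})_1=1$, which yields the factorizations
\[
A[\mathcal{F},j]=W^{(j)}P^{T},\qquad B[\mathcal{F},j,k]=W^{(j)}\Lambda_{j,k}P^{T}.
\]
Substituting $(M^{j,k})^{T}=P^{-T}\Lambda_{j,k}P^{T}$ gives $A[\mathcal{F},j]\,(M^{j,k})^{T}=W^{(j)}P^{T}P^{-T}\Lambda_{j,k}P^{T}=W^{(j)}\Lambda_{j,k}P^{T}=B[\mathcal{F},j,k]$, which is precisely \eqref{linear_eq}. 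The main obstacle is establishing the two factorizations cleanly: one must see that $A[\mathcal{F},j]$ and $B[\mathcal{F},j,k]$ share the \emph{same} left factor $W^{(j)}$, with the only difference between them being the diagonal $\Lambda_{j,k}$ inserted when $k$ moves away from $1$; this is exactly where the nonvanishing hypothesis $(\bold{u}^{s,j})_1\neq 0$ is essential, since it is what permits the normalization that makes the shared factor $W^{(j)}$ and the eigenvalues $(\bold{u}^{s,j})_k$ come out as stated. Once the factorizations are in hand, the cancellation $P^{T}P^{-T}=I$ finishes the argument.
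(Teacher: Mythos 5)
Your proposal is correct and follows essentially the same route as the paper: normalize the first entries, define $M^{j,k}:=P\Lambda_{j,k}P^{-1}$ so that \eqref{commuting_eq} holds by construction, factor $A[\mathcal{F},j]$ and $B[\mathcal{F},j,k]$ with a common left factor (your $W^{(j)}$ is exactly the paper's Khatri--Rao product $W_j$), and cancel. If anything, your normalization of \emph{each} $(\bold{u}^{s,j})_1$ to $1$ is slightly cleaner than the paper's normalization of the product, since it is what makes $\Lambda_{j,1}=I$ and the final cancellation exact.
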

\begin{proof}
	Since $(\bold{u}^{s,2})_1...(\bold{u}^{s,m})_1 \neq 0$,
	we can generally assume
	\[
	(\bold{u}^{s,2})_1 \cdots (\bold{u}^{s,m})_1 = 1,
	\]
	up to a scaling on $\bold{u}^{s,1}$. Denote the matrices
	\begin{align*}
		\hat{U}_{1}= &
		\begin{bmatrix}
			(\bold{u}^{1,1})_{1:r} & (\bold{u}^{2,1})_{1:r} & \dots & (\bold{u}^{r,1})_{1:r} \\
		\end{bmatrix}, \\
		\hat{U}_{j}= &
		\begin{bmatrix}
			\bold{u}^{1,j} & \bold{u}^{2,j} & \ldots & \bold{u}^{r,j}\\
		\end{bmatrix},~~~~j = 2,\ldots,m.
	\end{align*}
	Since $r \leq n_1$ and the first $r$ rows of $U^{(1)}$
	are linearly independent, the matrix $\hat{U}_{1}$ is invertible.
	Let $U^{(j)}$ be the $j$th decomposing matrix of $\mathcal{F}$.
	For $j=2,\ldots m,\, k=2,\ldots,n_j$, denote
	\[
	W_{j}  \coloneqq  U^{(2)}\odot  \cdots \odot U^{(j-1)}\odot U^{(j+1)}
	\cdots \odot U^{(m)},
	\]
	\[
	\Lambda_{j,k} \coloneqq \mbox{diag}((U^{(j)})_{k,:}) .
	\]
	Then one can verify that
	\begin{align} \label{proof1:ab}
		A[\mathcal{F},j]=W_{j} \Lambda_{j,1} \hat{U}_{1}^T,\quad
		B[\mathcal{F},j,k]=W_{j} \Lambda_{j,k} \hat{U}_{1}^T.
	\end{align}
	Let $\hat{G}$ be the matrix such that
	for all $(i,j,k) \in J$
	\begin{align} \label{proof1:mjk}
		M_{j,k}[\hat{G}] \coloneqq \hat{U}_{1} \Lambda_{j,k} (\hat{U}_{1})^{-1}.
	\end{align}
	Note that each $M_{j,k}[\hat{G}]$ satisfies \eqref{commuting_eq}.
	We next show that $\hat{G}$ is a generating matrix for $\mathcal{F}$.
	Applying expressions in \eqref{proof1:ab} and
	\eqref{proof1:mjk} to \eqref{linear_eq}, we get that
	{\begin{align*}
			A[\mathcal{F},j] (M^{j,k}[\hat{G}])^T
			&= W_j \Lambda_{j,1} \hat{U}_{1}^T({{\hat{U}_{1}}^{T}})^{-1} \Lambda_{j,k} (\hat{U}_{1})^T \\
			&=W_j \Lambda_{j,1}\Lambda_{j,k} \hat{U}_{1}^T=B[\mathcal{F},j,k] .
	\end{align*}}
	The above implies that $M^{j,k}[\hat{G}]$
	satisfies \eqref{linear_eq} for all $j,k$ in the range,
	so $\hat{G}$ is a generating matrix for $\mathcal{F}$.
\end{proof}

Theorem~\ref{decom_to_Mjk} implies that if the tensor
$\mathcal{F}$ has rank $r \leq n_1$ and has generic decomposing vectors,
there exists a generating matrix $G$ such that all $M^{j,k}[G]$
are simultaneously diagonalizable as in (\ref{commuting_eq}).
That is, there exists an invertible matrix
$V=[\bold{v}_1,\bold{v}_2,\ldots,\bold{v}_r]$ such that
\[
V^{-1}M^{j,k}[G]V = \mbox{diag}
[\lambda_{j,k,1},\lambda_{j,k,2},\ldots,\lambda_{j,k,r}]
\]
are all diagonal. For this case,
there must exist scalars $c_1,c_2,\ldots,c_r$ such that
\begin{equation}  	\label{proof}
	\mathcal{F}_{1:r,1,...,1}  =
	c_1 \bold{v}_1+c_2 \bold{v}_2+...+c_r \bold{v}_r .
\end{equation}
Let $\hat{\mathcal{F}} \coloneqq \mathcal{F}_{1:r,:,\cdots,:}$
be the subtensor and let
\[
\mathcal{H} \coloneqq
c_1  \bold{w}^{1,1}\otimes \cdots \otimes \bold{w}^{1,m}+ \cdots+
c_r  \bold{w}^{r,1}\otimes \cdots \otimes \bold{w}^{r,m} ,
\]
where the vectors $\bold{w}^{s,1}=\bold{v}_s$
and ($1\leq s \leq r,  2\leq j \leq m$)
\[
\bold{w}^{s,j} \coloneqq
\begin{bmatrix}
	1 & \lambda_{j,2,s} & \lambda_{j,3,s} & \cdots & \lambda_{j,n_j,s}
\end{bmatrix}^T.
\]
Then we show that $\mathcal{H} = \hat{\mathcal{F}}$.
By \eqref{inner_p} and \eqref{phi_def},
\[
\langle \phi[G,\tau] p, \hat{\mathcal{F}} \rangle  =
\langle  \phi[G,\tau] p, \mathcal{H} \rangle  =0,
\]
for all $p \in \mathbb{M}_{\{1,j\}^c}$, so
\begin{equation}
	\langle \phi[G,\tau] p, \mathcal{H}-\hat{\mathcal{F}} \rangle  =0,
	\quad \mbox{for all} \, \, p \in \mathbb{M}_{\{1,j\}^c}.
\end{equation}
The equation \eqref{proof} implies that
\begin{align}
	(\mathcal{H}-\hat{\mathcal{F}})_{1:r,1,\ldots,1}=0. \label{first:zero}
\end{align}
{By \eqref{first:zero}}, 
for $\tau = (i,2,k) \in J$,  we get
\[
\langle \mathcal{H}-\hat{\mathcal{F}},\phi[G,\tau]\rangle  =0,~~~~
(\mathcal{H}-\hat{\mathcal{F}})_{:,:,1,\ldots,1}=0.
\]
Then, for $\tau = (i,2,k) \in J$,  we have
\[
\langle \mathcal{H}-\hat{\mathcal{F}},\phi[G,\tau] x_{2,:}\rangle  =0,~~~~ (\mathcal{H}-\hat{\mathcal{F}})_{:,:,:,1,\ldots,1}=0.
\]
Doing this inductively, we can see $\mathcal{H}=\hat{\mathcal{F}}$.
Since $\hat{\mathcal{F}}=\mathcal{F}_{1:r,:...}$
and $\mathcal{F}$ has rank $r$,
$\mathcal{F}$ has a tensor decomposition
\[
{\mathcal{F}=U^{(1)} \circ U^{(2)} \circ \ldots \circ U^{(m)} .}
\]
Let $W \coloneqq {U^{(1)}(U^{(1)}_{1:r,:})^{-1}} \in \mathbb{C}^{n_1\times r}$,
then $(W)_{1:r,:}=I_r $ and
\begin{align}
	\mathcal{F} =  W \times_1  \hat{\mathcal{F}} .
\end{align}
This implies the tensor decomposition
\begin{equation}
	{\mathcal{F} = \sum_{i=1}^r c_i \mathbf{\hat{w}}^{i,1}
		\otimes \mathbf{w}^{i,1} \otimes \cdots \otimes \mathbf{w}^{i,m},}
\end{equation}
where the vectors
\[
\mathbf{\hat{w}}^{i,1} = W \mathbf{w}^{i,1} =
\begin{bmatrix}
	\mathbf{w}^{i,1} \\ \mathbf{\tilde{w}}^{i,1}
\end{bmatrix}.
\]
In computation, we do not need to compute the matrix $W$ explicitly.
The vectors $\mathbf{\tilde{w}}^{i,1}$
can be obtained by solving the linear least squares
\begin{align}  	\label{eq:solve least_squares_2}
	\min_{ \bold{z}_1,...,\bold{z}_{r} }
	\bigg\lVert \sum_{s=1}^r \bold{z}_s\otimes \bold{w}^{s,2}
	\otimes ...\otimes \bold{w}^{s,m} -\mathcal{F}_{r+1:n_1,:,...,:}
	\bigg\rVert^2.
\end{align}
The optimal solutions are the vectors $\tilde{\mathbf{w}}^{i,1}$.
Then $\mathcal{F}$ has the rank-$r$ decomposition
\begin{align}  \label{eq:decom newf}
	\mathcal{F} = \sum_{i=1}^r \mathbf{\hat{w}}^{i,1}\otimes
	\mathbf{w}^{i,2} \otimes ... \otimes \mathbf{w}^{i,m} .
\end{align}

When $\mathcal{F}$ is a rank-$r$ tensor,
the above process can produce a rank-$r$ decomposition for $\mathcal{F}$.
When $\mathcal{F}$ is near to a rank-$r$ tensor,
one can similarly obtain a rank-$r$ tensor approximation for $\mathcal{F}$.
This is shown in Section~\ref{sc:alg}.

\section{The Higher order Tensor Correlation Maximization}
\label{sc:LRTA}

Let  $\{ (\bfy_{i,1}, \ldots, \bfy_{i,m}) \}_{i=1}^N$ be a multi-view data set,
with $m$ views and $N$ points.
The vector $\bfy_{i, j} \in \bbR^{n_j}$ is the $i$th data point of the view $j$
residing in the $n_j$-dimensional space.
We are looking for a $r$-dimensional latent space $\bbR^r$ such that each
$\bfy_{i, j}$ is projected to $\bfz_{i, j} \in \bbR^{r}$.
The projection for the $j$th view
can be represented by a matrix $P_j$, that is,
$\bfz_{i, j}  = P_j^{\T} \bfy_{i,j}$.
The higher order canonical correlation $\rho$ of $m$ views is the quantity
\begin{align}  \label{eq:hocc}
	\rho \coloneqq \sum_{i=1}^N \sum_{s=1}^r \prod_{j=1}^m (\bfz_{i,j})_s.
\end{align}
The tensor canonical correlation analysis aims to
find optimal projection matrices $P_1, \ldots, P_m$ that maximize $\rho$.
When $m=2$, $\rho$ reduces to the trace of sample cross-correlation,
which is used in the classical canonical correlation analysis.
When $m \geq 3$, $\rho$ generalizes CCA
for capturing higher order correlations,
which is inherently different from the sum of pairwise correlations
\cite{nielsen2002multiset}.

The connection of $\rho$ as in (\ref{eq:hocc}) to a tensor can be built based on
the $t$-mode product of a tensor obtained from the input data
with projection matrices $P_1, \ldots, P_m$.
The tensor of the input data is the $m$th order tensor
of dimension $n_1 \times \cdots \times n_m$
\begin{align}
	\calC \coloneqq \sum_{i=1}^N \bfy_{i,1} \otimes \cdots \otimes \bfy_{i,m}.
\end{align}
Write that $P_j = [\bfp^{1,j}, \ldots, \bfp^{r,j}]$,
where $\bfp^{s,j} \in \bbR^{n_j}$ is the $s$th column of $P_j$.
The higher order canonical correlation $\rho$ can be written as
\begin{align}
	\rho = \sum_{s=1}^r
	(\bfp^{s,1})^T \times_1  \cdots  (\bfp^{s,m})^T \times_m \calC.
\end{align}
People often pose the uncorrelation constraints
for projected points in the latent common space
\begin{align}
	\frac{1}{N} \sum_{i=1}^N \bfz_{i,j} \bfz_{i,j}^{\T} = P_j^{\T} C_j P_j= I_r,
	j=1,\ldots, m,
\end{align}
where the $j$th view matrix
\[
C_j  \coloneqq \frac{1}{N} \sum_{i=1}^N \bfy_{i,j} \bfy_{i,j}^{\T}
\]
Denote the vectors and tensor
\begin{align}  	\label{tcca:p}
	\bfu^{s,j}  \coloneqq  C_j^{\frac{1}{2}} \bfp^{s,j}, \quad
	\bfp^{s,j}  \coloneqq  C_j^{-\frac{1}{2}} \bfu^{s,j},
\end{align}
\begin{equation} \label{tensor:cM}
	\calM \coloneqq
	C_1^{-\frac{1}{2}} \times_1 \cdots  C_m^{-\frac{1}{2}}  \times_m \calC.
\end{equation}
Then, we get the tensor correlation maximization problem
\begin{equation}  \label{op:tcca-origin}
	\left\{\begin{array}{cl}
		\max\limits_{\bfu^{s,j}} & \sum_{s=1}^r
		(\bfu^{s,1})^T \times_1  \cdots  (\bfu^{s,m})^T \times_m  \calM \\
		\textrm{s.t.} ~&   \| \bfu^{s,j} \|_2 = 1, \,
		s= 1, \ldots r, \, j=1, \ldots, m,\\
		&(\bfu^{s,j})^{\T} \bfu^{s',j} = 0 \quad \mbox{for all} \,\, s \ne  s'.
	\end{array} \right.
\end{equation}
The above is equivalent to
the rank-$r$ tensor approximation problem
\begin{equation} \label{lra:orth}
	\left\{\begin{array}{rl}
		\min\limits_{\bfu^{s,j}, \lambda_s}  & \Big\| \calM -
		\sum_{s=1}^r \lambda_{s} \cdot \bfu^{s,1}
		\otimes \cdots \otimes \bfu^{s, m} \Big\|^2, \\
		\textrm{s.t.}  & \| \bfu^{s,j} \|_2 = 1,
		s= 1, \ldots r, \, j=1, \ldots, m, \\
		&(\bfu^{s,j})^{\T} \bfu^{s',j} = 0 \quad \mbox{for all} \,\, s \ne  s'.
	\end{array} \right.
\end{equation}
The optimization (\ref{lra:orth}) requires to compute the best
rank-$r$ orthogonal tensor approximation.
This is typically a computationally hard task.
Generally, the orthogonality constraints in (\ref{op:tcca-origin})
is hard to be enforced, because the rank decomposition and
the orthogonal decomposition are usually not
achievable simultaneously \cite{Lathauwer2000}.
For better performance in computational practice,
people often relax the orthogonality constraints (see  \cite{luo2015tensor})
and then solve the following relaxation of (\ref{lra:orth}):
\begin{equation} \label{op:lra}
	\left\{\begin{array}{rl}
		\min\limits_{\bfu^{s,j}, \lambda_s}  & \Big\| \calM -
		\sum_{s=1}^r \lambda_{s} \cdot \bfu^{s,1}
		\otimes \cdots \otimes \bfu^{s, m} \Big\|^2, \\
		\textrm{s.t.}  & \| \bfu^{s,j} \|_2 = 1,
		s= 1, \ldots r, \, j=1, \ldots, m.
	\end{array} \right.
\end{equation}
After the vectors $\bfu^{s,j}$ are obtained by solving (\ref{op:lra}),
the projection matrices $P_j$ can be chosen such that
$ \bfp^{s,j}= C_r^{-\frac{1}{2}} \bfu^{s,j}$.
We would like to remark that when $\calM$
is sufficiently close to a rank-$r$ orthogonal tensor,
the optimizer of (\ref{op:lra})
is expected to be close to a rank-$r$ orthogonal tensor.

\section{The Algorithm for TCCA}
\label{sc:alg}

For the given multi-view data set
$\{ (\bfy_{i,1}, \ldots, \bfy_{i,m}) \}_{i=1}^n$,
we can formulate the tensor $\mathcal{M}$ as in (\ref{tensor:cM}).
Then compute a low rank approximating tensor for
$\mathcal{M}$ and use it to get the projection matrices $P_j$.

We use the method described in Section~\ref{sc:GP}
to compute a rank-$r$ approximation for $\calM$.
Suppose the rank $r \leq n_1$.
By \eqref{def:A_b}, the equation \eqref{linear_eq_g} is equivalent to
\begin{align}  	\label{linear:eq2}
	A[\mathcal{M},j] (M^{j,k}[G])^T = B[\mathcal{M},j,k].
\end{align}
Due to noises, the linear equation (\ref{linear:eq2})
may be overdetermined or even inconsistent.
Therefore, we look for a matrix $G$ that satisfies
\eqref{linear:eq2} as much as possible.
This can be done by solving linear least squares.
Let $G^{ls}$ be a least square solution to
\begin{align}
	\min_{G\in \mathbb{C}^{[r] \times J} } \sum_{\tau=(i,j,k)\in J}
	\bigg\lVert A[\mathcal{M},j] M^{j,k}[G]^T - B[\mathcal{M},j,k] \bigg\rVert^2.
	\label{least_squares_1}
\end{align}
After $G^{ls}$ is obtained, select generic scalars
{ $\xi_{j,k} \in \mathbb{R}$ obeying the standard normal distribution}
and let
\begin{align}  	\label{xi_M}
	M[\xi,G^{ls}] \coloneqq \sum_{(1,j,k)\in J} \xi_{j,k}M^{j,k}[G^{ls}].
\end{align}
We can compute its Schur Decomposition as
\begin{align}  	\label{schur_decom}
	Q^*M[\xi,G^{ls}]Q=T,
\end{align}
where $Q=[\bold{q}_1,\ldots,\bold{q}_r]$ is unitary and $T$ is upper triangular.
For $s=1,...,r$, $j=2,...,m$, let
\begin{align}  	\label{compute_v}
	\bold{v}^{s,j} \coloneqq (1, \bold{q}_s^* M^{j,2}[G^{ls}]\bold{q}_s,
	\ldots ,\bold{q}_s^* M^{j,n_j}[G^{ls}]\bold{q}_s) .
\end{align}
{ If the noises are big, it may have complex eigenvalue pairs,
	$\bold{q}_s$ maybe complex and the above vectors $\bold{v}^{s,j}$ maybe complex.}
%%\marginpar{\color{red} Response to Referee 1 comment (3)}
In computational practice,
we can choose the real part to get a real
low rank tensor approximation.
Denote the real part of $\bold{v}^{s,j}$ by $\bold{\hat{v}}^{s,j}_{real}$.
After they are obtained,
we solve the linear least squares problem
\begin{align}
	\min_{\bold{z}_1,...,\bold{z}_{r}\in \mathbb{R}^{n_1}} \bigg\lVert \sum_{s=1}^r \bold{z}_s\otimes \bold{v}^{s,2}_{real} \otimes \bold{v}^{s,3}_{real} \otimes ... \otimes \bold{v}^{s,m}_{real}-{\mathcal{M}}\bigg\rVert^2.
	\label{least_squares_2}
\end{align}
Let $(\bold{v}^{1,1},\bold{v}^{2,1},...,\bold{v}^{r,1})$
be optimal ones for the least squares problem \eqref{least_squares_2}.
Then we consider the tensor
\begin{align}
	\mathcal{X}^{gp} \coloneqq \sum_{s=1}^r \bold{v}^{s,1} \otimes
	\bold{v}^{s,2}_{real} \otimes ... \otimes \bold{v}^{s,m}_{real}.
	\label{constuct_solution}
\end{align}
It can be used as an initial point for
solving the nonlinear optimization
\begin{equation}   	\label{least_squares_3}
	\min_{\bold{u}^{s,j}\in \mathbb{R}^{n_j}}
	\bigg\lVert \sum_{s=1}^r \bold{u}^{s,1} \otimes \bold{u}^{s,2}
	\otimes  \cdots  \otimes \bold{u}^{s,m}-\mathcal{M}\bigg\rVert^2.
\end{equation}
By solving (\ref{least_squares_3}), one can improve
the quality of the rank-$r$ approximating tensor $\mathcal{X}^{opt}$.
Finally, we get the projection
matrices $P_1, \ldots, P_m$ as in \eqref{tcca:p}.

The above can be summarized as the following algorithm.

\begin{alg}   \label{Algorithm-TCCA}
	(A generating polynomial method for TCCA)
	\begin{itemize}
		
		\item [Input:] a multi-view data set
		$\{ (\bfy_{i,1}, \ldots, \bfy_{i,m}) \}_{i=1}^N$
		and an approximating rank $r \le n_1$.
		
		\item [Step~1.]
		Generate tensor $\mathcal{M} \in \mathbb{R}^{n_1\times \cdots n_m}$
		as in \eqref{tensor:cM}.
		
		\item [Step~2.]
		Solve the linear least squares \eqref{least_squares_1} of tensor $\mathcal{M}$ for an  optimizer $G^{ls}$.
		
		\item [Step~3.]
		Choose generic {$\xi_{j,k} \in \mathbb{R}$
			obeying the standard normal distribution}
		and formulate $M[\xi,G^{ls}]$ as in \eqref{xi_M}.
		Compute the Schur Decomposition \eqref{schur_decom}.
		
		\item [Step~4.]
		For $s\in 1,...,r$ and $j \in 2,...,m$,
		compute $\bold{v}^{s,j}$ as in \eqref{compute_v}
		and keep its real part only ($\bold{v}^{s,j}=\mathrm{real}(\bold{v}^{s,j})$).
		Solve \eqref{least_squares_2} for optimal solution $(\bold{v}^{1,1},\bold{v}^{2,1},...,\bold{v}^{r,1})$.
		
		\item [Step~5.]
		Compute an improved solution $\bold{u}^{s,j}$ as in
		\begin{equation*}
			\min_{\bold{u}^{s,j}\in \mathbb{R}^{n_j}}
			\bigg\lVert \sum_{s=1}^r \bold{u}^{s,1} \otimes \bold{u}^{s,2} \otimes ... \otimes \bold{u}^{s,m}-\mathcal{F}\bigg\rVert^2.
		\end{equation*}
		
		\item [Output:] The matrices $P_j, \ldots, P_m$ as in \eqref{tcca:p}.

	\end{itemize}
\end{alg}

When $\mathcal{F}$ is a rank-$r$ tensor,
Algorithm~\ref{Algorithm-TCCA} should give
a rank-$r$ decomposition for $\mathcal{F}$.
When $\mathcal{F}$ is close to a rank-$r$ tensor,
Algorithm~\ref{Algorithm-TCCA} is expected to give
a good rank-$r$ approximation.
An interesting future work is to study the stability analysis.

\section{Numerical Experiments on Multi-View Data}
\label{sc:num}

We implement the Algorithm~\ref{Algorithm-TCCA} in
{\tt MATLAB} and run numerical experiments in {\tt MATLAB} 2020b
on a workstation with Ubuntu 20.04.2 LTS,
Intel® Xeon(R) Gold 6248R CPU @ 3.00GHz and memory 1TB.
We evaluate our algorithm for multi-view feature extraction
by comparing it with two baseline methods on two real data sets.

\subsection{Data description and experimental setup}
\label{sec:setting}

Two image data sets are used in this experiment: Caltech101-7 \cite{fei2007learning}
and Scene15 \cite{lazebnik2006beyond}.
We applied six feature descriptors to extract features of
views including centrist \cite{wu2008place}, gist \cite{oliva2001modeling},
lbp \cite{ojala2002multiresolution},
histogram of oriented gradient (hog), color histogram (ch),
and sift-spm \cite{lazebnik2006beyond}. Note that Scene15 consists of gray images,
so ch is not used. The statistics of the two multi-view data sets
are summarized in Table~\ref{tab:datasets}.

\begin{table}[!t]
	\caption{Data sets used in the experiments.}  \label{tab:datasets}
	\centering
	\begin{tabular}{@{}c|c|c|c|c|c|c|c|c@{}}
		\hline
		Data set &  samples & class& centrist & gist &lbp &hog &ch & sift-spm\\
		\hline			
		Caltech101-7 & 1474 & 7 &  254 &  512 &  1180 &  1008 &  64 & 1000\\
		Scene15 & 4310 & 15 & 254 & 512 & 531  & 360 & - &  1000\\
		\hline
	\end{tabular}
\end{table}

As our main focus is on data sets with more than two views, our proposed algorithm is evaluated by comparing with multiset CCA (mcca) \cite{via2007learning}
and TCCA using ALS (als) \cite{luo2015tensor}.
For each data, we first apply principal component analysis (PCA) \cite{pca2002}
to each view to reduce the input dimension to 20 so that the constructed tensor
can be properly handled by tensor-based methods.
And then, we split the data into training and testing sets
with a predefined training ratio. All compared methods
are run on the training data to get the projection matrix
of each view for a given dimension of the common space (or rank).
To report the testing accuracy, we apply the learned projection matrix
to both training and testing sets of each view,
concatenate the projected features of all views
as the final representation of each sample,
train linear support vector classifier (SVC) \cite{chang2011libsvm}
on training data and evaluate the performance of the trained classifier
on testing data. The classification accuracy is used as the evaluation metric.
The regularization parameter of the linear SVC
is tuned in $\{0.01, 0.1, 1, 10, 100\}$.
The experiments of the compared methods
on each data set are repeated ten times with
randomly sampled training and testing sets,
and the mean accuracy with standard deviation
on the ten experiments are reported for compared methods.

\begin{figure}
	\centering
	\begin{tabular}{cc}
		\includegraphics[width=0.4\textwidth]{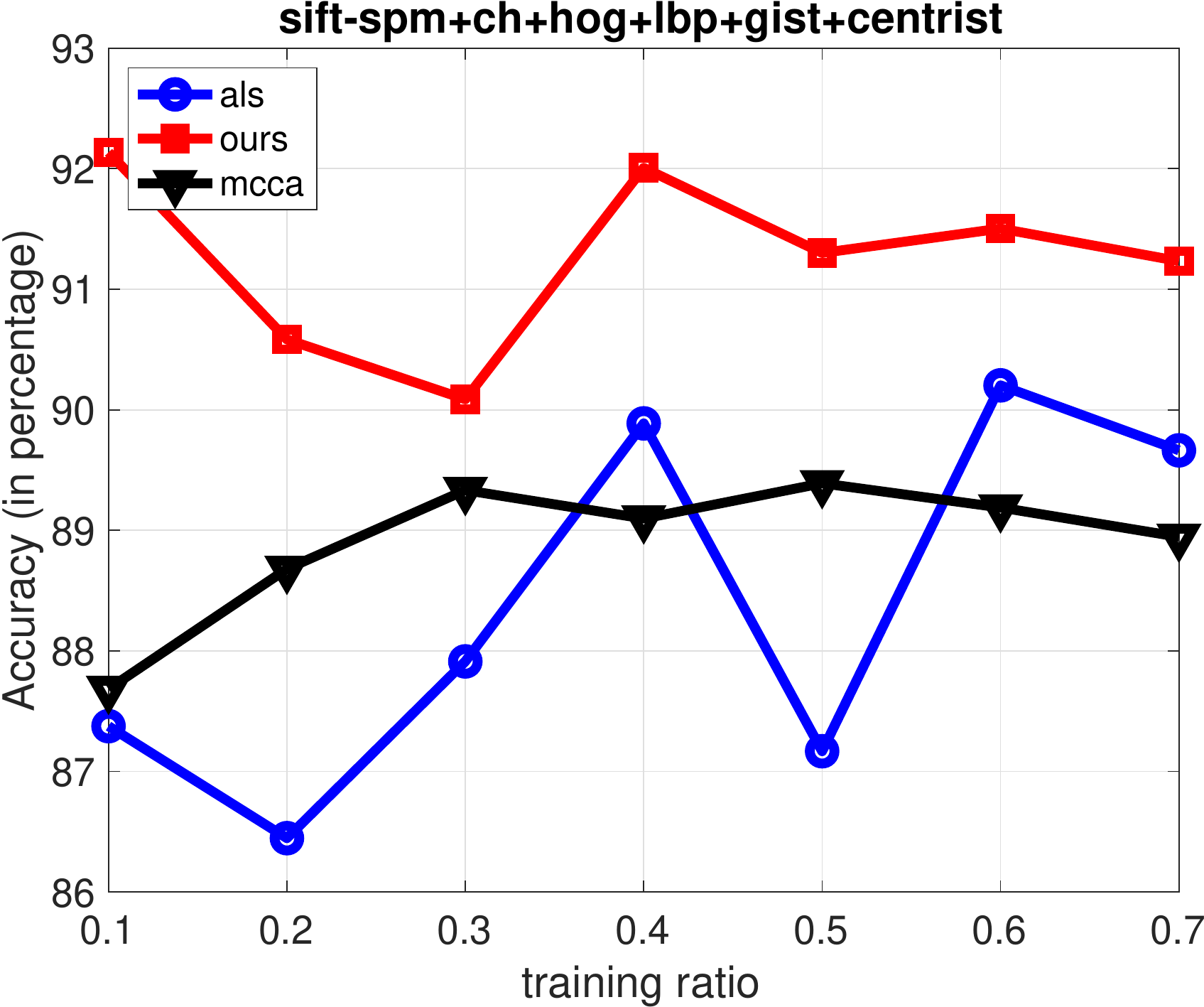} & \includegraphics[width=0.4\textwidth]{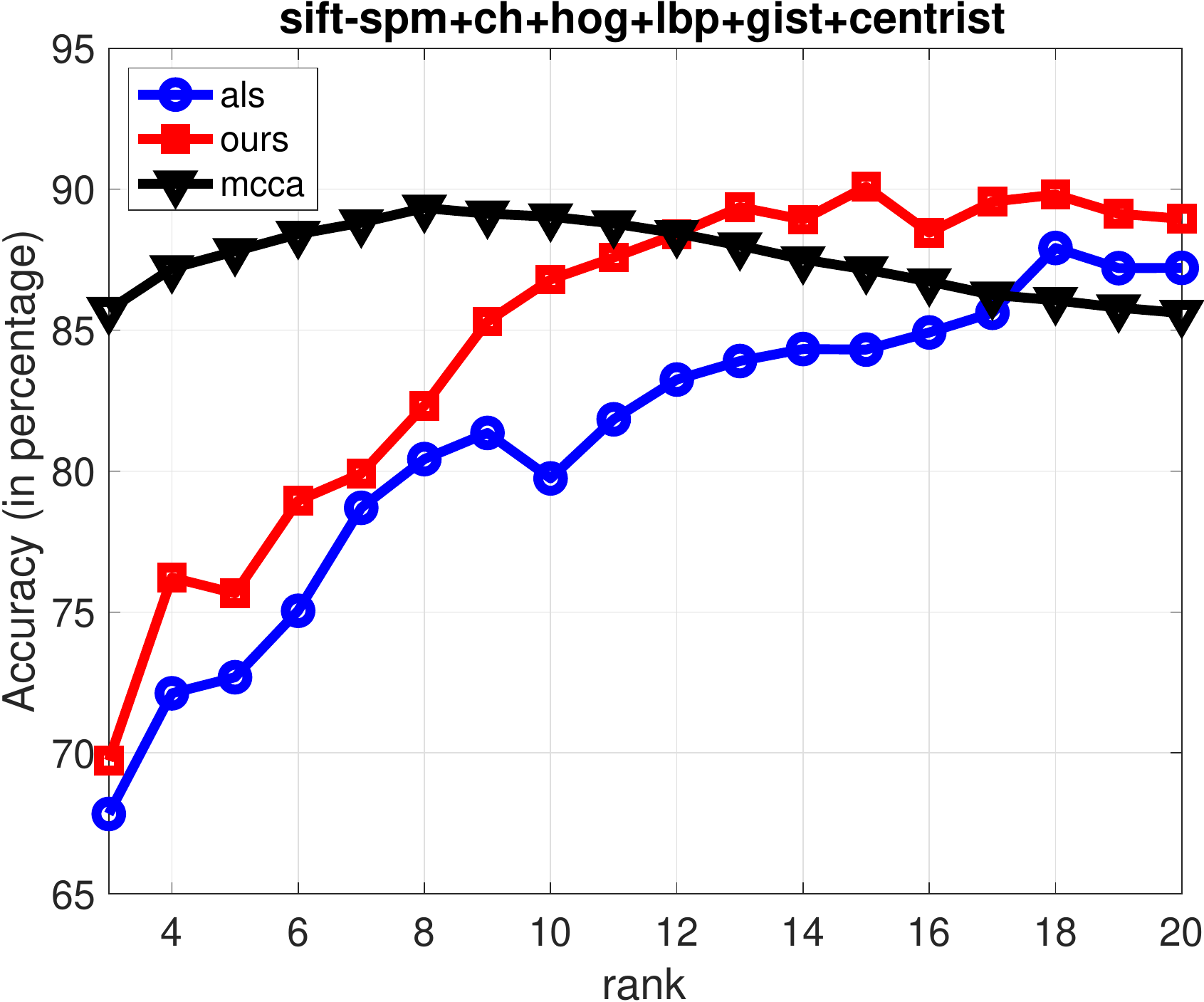}\\
		(a) & (b)
	\end{tabular}
	\caption{Sensitivity analysis of compared methods with six views on data Caltech101-7. (a) varying the training ratios over common spaces from 3 to 20; (b) varying the size of common space on 30\% training data.} \label{fig:sense-caltech}
\end{figure}

\subsection{Experiments on Caltech101-7} \label{sec:exp-cal}

We tested the overall performance of three compared methods on Caltech101-7
with all combinations of more than two views.
For six views, there are 42 combinations in total.
This experiment is conducted on 30\% training data and 70\%
testing data by running three compared methods on each combination separately
with the size of the common space (or rank) varied from 3 to 20.
The experiment is repeated 10 times on randomly splits drawn from the input data,
and the mean accuracies with standard deviations
of three compared methods are reported in Table \ref{tab:combination}.

\begin{table}
	\caption{Mean accuracy and standard deviation of three compared methods on $42$ data sets generated from Caltech101-7 over $10$ random splits with 30\% training data and rank $20$. }
	\label{tab:combination}
	\centering
	\begin{tabular}{lccc} \hline
		views & als & Alg.~\ref{Algorithm-TCCA} & mcca \\ \hline\hline
		centrist+gist+lbp & 95.23 $\pm$ 0.69 & \textbf{95.30 $\pm$ 0.73} & 90.74 $\pm$ 0.88\\
		centrist+gist+hog & 95.26 $\pm$ 0.58 & \textbf{95.32 $\pm$ 0.46} & 90.28 $\pm$ 0.96\\
		centrist+gist+ch & 92.47 $\pm$ 2.21 & \textbf{93.86 $\pm$ 0.96} & 90.66 $\pm$ 1.25\\
		centrist+gist+sift-spm & 95.56 $\pm$ 0.75 & \textbf{95.93 $\pm$ 0.39} & 92.59 $\pm$ 0.84\\
		centrist+lbp+hog & 94.95 $\pm$ 0.63 & \textbf{95.19 $\pm$ 0.65} & 90.09 $\pm$ 0.75\\
		centrist+lbp+ch & 92.63 $\pm$ 0.56 & \textbf{92.97 $\pm$ 0.65} & 90.06 $\pm$ 1.04\\
		centrist+lbp+sift-spm & 94.82 $\pm$ 0.75 & \textbf{95.15 $\pm$ 0.71} & 91.59 $\pm$ 0.71\\
		centrist+hog+ch & 91.29 $\pm$ 1.14 & \textbf{92.62 $\pm$ 1.07} & 89.79 $\pm$ 0.68\\
		centrist+hog+sift-spm & 93.46 $\pm$ 1.34 & \textbf{93.86 $\pm$ 1.27} & 91.15 $\pm$ 0.55\\
		centrist+ch+sift-spm & 90.24 $\pm$ 1.83 & \textbf{92.29 $\pm$ 0.94} & 88.99 $\pm$ 0.33\\
		gist+lbp+hog & 95.49 $\pm$ 0.92 & \textbf{95.63 $\pm$ 0.82} & 90.25 $\pm$ 0.85\\
		gist+lbp+ch & 93.04 $\pm$ 1.11 & \textbf{93.99 $\pm$ 0.60} & 90.93 $\pm$ 1.29\\
		gist+lbp+sift-spm & 95.71 $\pm$ 1.16 & \textbf{96.02 $\pm$ 0.60} & 92.41 $\pm$ 0.92\\
		gist+hog+ch & 90.86 $\pm$ 1.74 & \textbf{91.87 $\pm$ 1.92} & 89.60 $\pm$ 0.74\\
		gist+hog+sift-spm & 93.02 $\pm$ 0.58 & \textbf{93.05 $\pm$ 0.52} & 91.03 $\pm$ 0.47\\
		gist+ch+sift-spm & 90.14 $\pm$ 2.19 & \textbf{92.73 $\pm$ 1.09} & 90.25 $\pm$ 0.86\\
		lbp+hog+ch & 91.65 $\pm$ 1.51 & \textbf{92.98 $\pm$ 1.12} & 89.88 $\pm$ 0.67\\
		lbp+hog+sift-spm & 93.18 $\pm$ 0.77 & \textbf{94.16 $\pm$ 1.10} & 91.21 $\pm$ 0.50\\
		lbp+ch+sift-spm & 90.48 $\pm$ 1.61 & \textbf{92.33 $\pm$ 1.10} & 89.09 $\pm$ 0.46\\
		hog+ch+sift-spm & 88.35 $\pm$ 3.82 & \textbf{91.93 $\pm$ 0.94} & 90.07 $\pm$ 0.87\\			\hline\hline
		centrist+gist+lbp+hog & 92.14 $\pm$ 3.41 & \textbf{95.12 $\pm$ 1.03} & 89.14 $\pm$ 0.70\\
		centrist+gist+lbp+ch & 89.25 $\pm$ 2.64 & \textbf{92.41 $\pm$ 1.93} & 90.17 $\pm$ 0.87\\
		centrist+gist+lbp+sift-spm & 90.05 $\pm$ 3.10 & \textbf{94.60 $\pm$ 1.25} & 90.64 $\pm$ 0.88\\
		centrist+gist+hog+ch & 84.91 $\pm$ 4.33 & \textbf{93.01 $\pm$ 2.05} & 89.63 $\pm$ 0.53\\
		centrist+gist+hog+sift-spm & 88.32 $\pm$ 4.01 & \textbf{92.94 $\pm$ 0.81} & 90.42 $\pm$ 0.46\\
		centrist+gist+ch+sift-spm & 85.30 $\pm$ 3.53 & \textbf{92.90 $\pm$ 1.94} & 90.97 $\pm$ 0.63\\
		centrist+lbp+hog+ch & 87.09 $\pm$ 2.72 & \textbf{92.52 $\pm$ 1.78} & 89.29 $\pm$ 0.55\\
		centrist+lbp+hog+sift-spm & 87.00 $\pm$ 5.27 & \textbf{93.30 $\pm$ 2.36} & 89.82 $\pm$ 0.58\\
		centrist+lbp+ch+sift-spm & 84.64 $\pm$ 3.74 & \textbf{92.21 $\pm$ 1.96} & 89.21 $\pm$ 0.90\\
		centrist+hog+ch+sift-spm & 84.65 $\pm$ 3.19 & \textbf{92.31 $\pm$ 1.33} & 90.02 $\pm$ 0.51\\
		gist+lbp+hog+ch & 86.50 $\pm$ 6.35 & \textbf{92.76 $\pm$ 1.41} & 89.11 $\pm$ 0.38\\
		gist+lbp+hog+sift-spm & 87.96 $\pm$ 2.63 & \textbf{93.86 $\pm$ 1.21} & 89.96 $\pm$ 0.47\\
		gist+lbp+ch+sift-spm & 85.25 $\pm$ 3.50 & \textbf{91.93 $\pm$ 1.98} & 90.83 $\pm$ 0.60\\
		gist+hog+ch+sift-spm & 81.57 $\pm$ 5.98 & \textbf{90.82 $\pm$ 2.06} & 90.53 $\pm$ 0.54\\
		lbp+hog+ch+sift-spm & 83.20 $\pm$ 4.93 & \textbf{91.59 $\pm$ 1.65} & 89.89 $\pm$ 0.68\\
		
		\hline\hline
		
		gist+lbp+hog+ch+sift-spm & 84.38 $\pm$ 3.13 & \textbf{91.32 $\pm$ 2.46} & 89.76 $\pm$ 0.61\\
		centrist+lbp+hog+ch+sift-spm & 86.11 $\pm$ 3.49 & \textbf{91.80 $\pm$ 2.32} & 89.48 $\pm$ 0.73\\
		centrist+gist+hog+ch+sift-spm & 86.67 $\pm$ 4.70 & \textbf{92.51 $\pm$ 1.11} & 90.12 $\pm$ 0.60\\
		centrist+gist+lbp+ch+sift-spm & 88.70 $\pm$ 3.73 & \textbf{93.07 $\pm$ 1.35} & 89.96 $\pm$ 0.87\\
		centrist+gist+lbp+hog+sift-spm & 85.31 $\pm$ 4.66 & \textbf{94.17 $\pm$ 1.37} & 89.25 $\pm$ 0.47\\
		centrist+gist+lbp+hog+ch & 85.75 $\pm$ 4.66 & \textbf{92.48 $\pm$ 2.10} & 89.00 $\pm$ 0.49\\
		\hline\hline
		sift-spm+ch+hog+lbp+gist+centrist & 87.91 $\pm$ 2.98 & \textbf{90.09 $\pm$ 2.95} & 89.33 $\pm$ 0.65\\		
		\hline
	\end{tabular}
\end{table}

From Table \ref{tab:combination}, we have the following observations: (i) als outperforms mcca on three views, but underperforms mcca for more than three views; (ii) Our method outperforms both als and mcca consistently over all 42 combinations. These results imply that tensor-based methods can outperform mcca, when a good tensor approximation solver like our proposed algorithm is applied.

\begin{figure}[H]
	\centering
	\begin{tabular}{ccc}
		\includegraphics[width=0.31\textwidth]{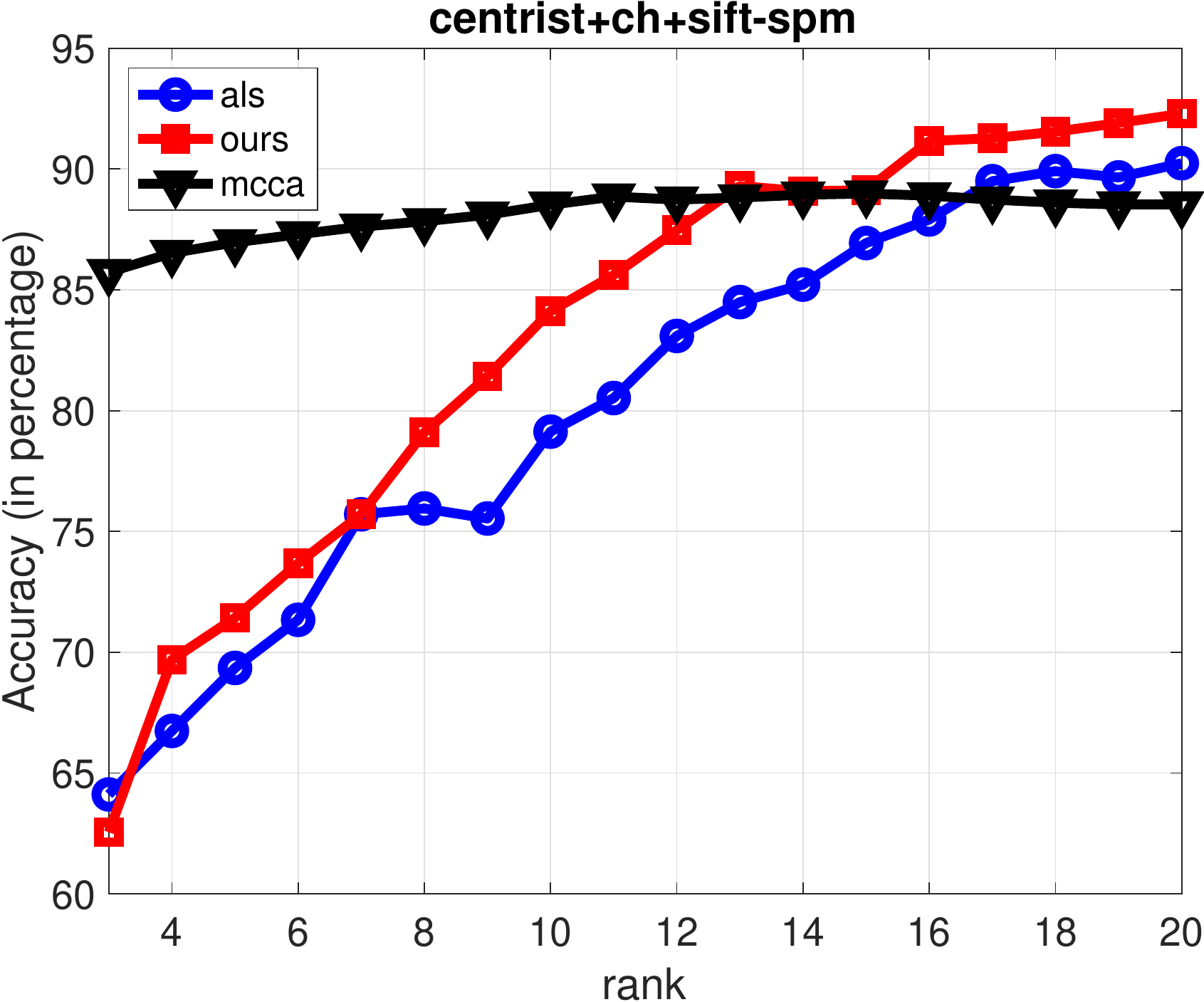}  &
		\includegraphics[width=0.31\textwidth]{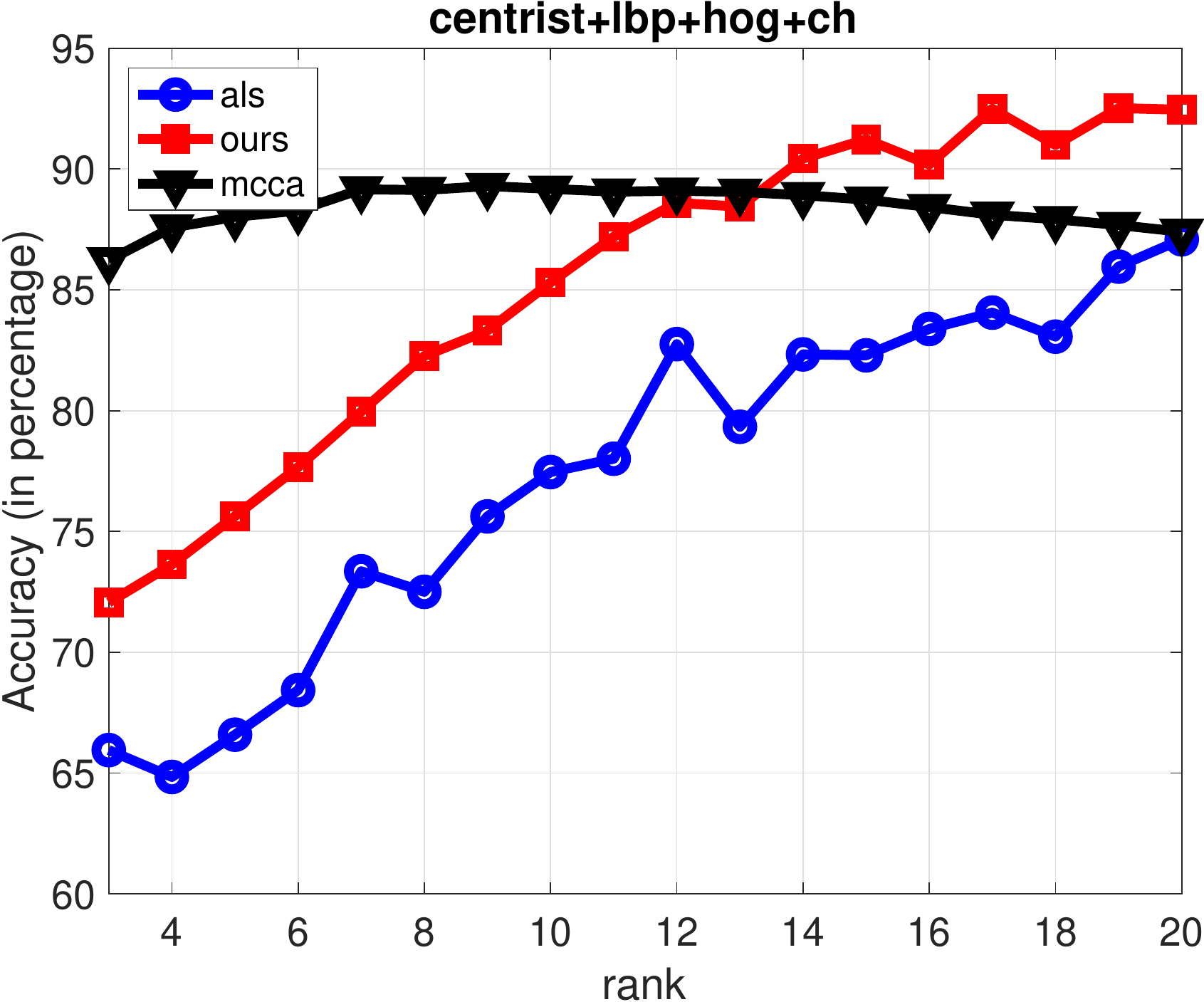} &		 \includegraphics[width=0.31\textwidth]{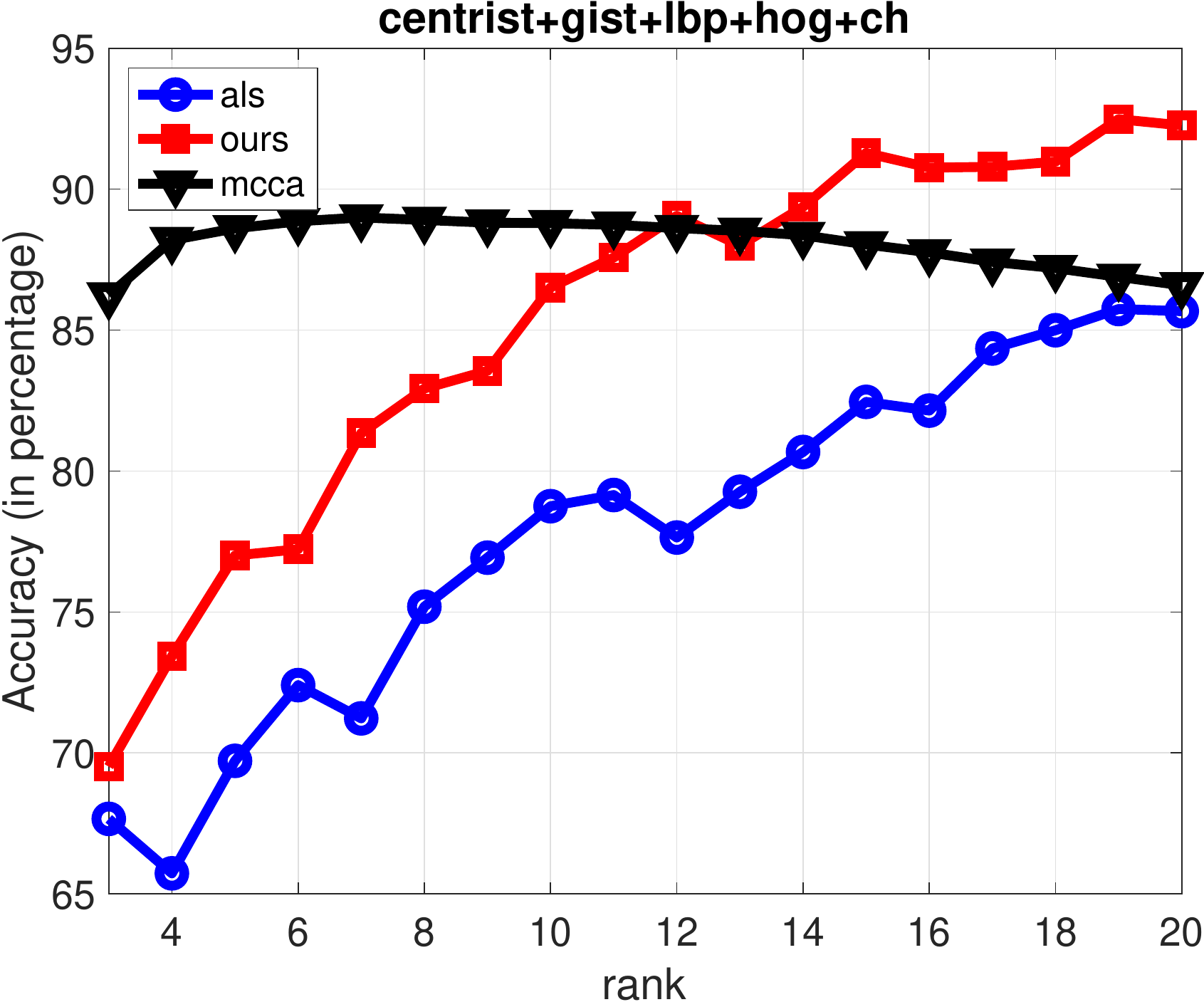} \\
		
		\includegraphics[width=0.31\textwidth]{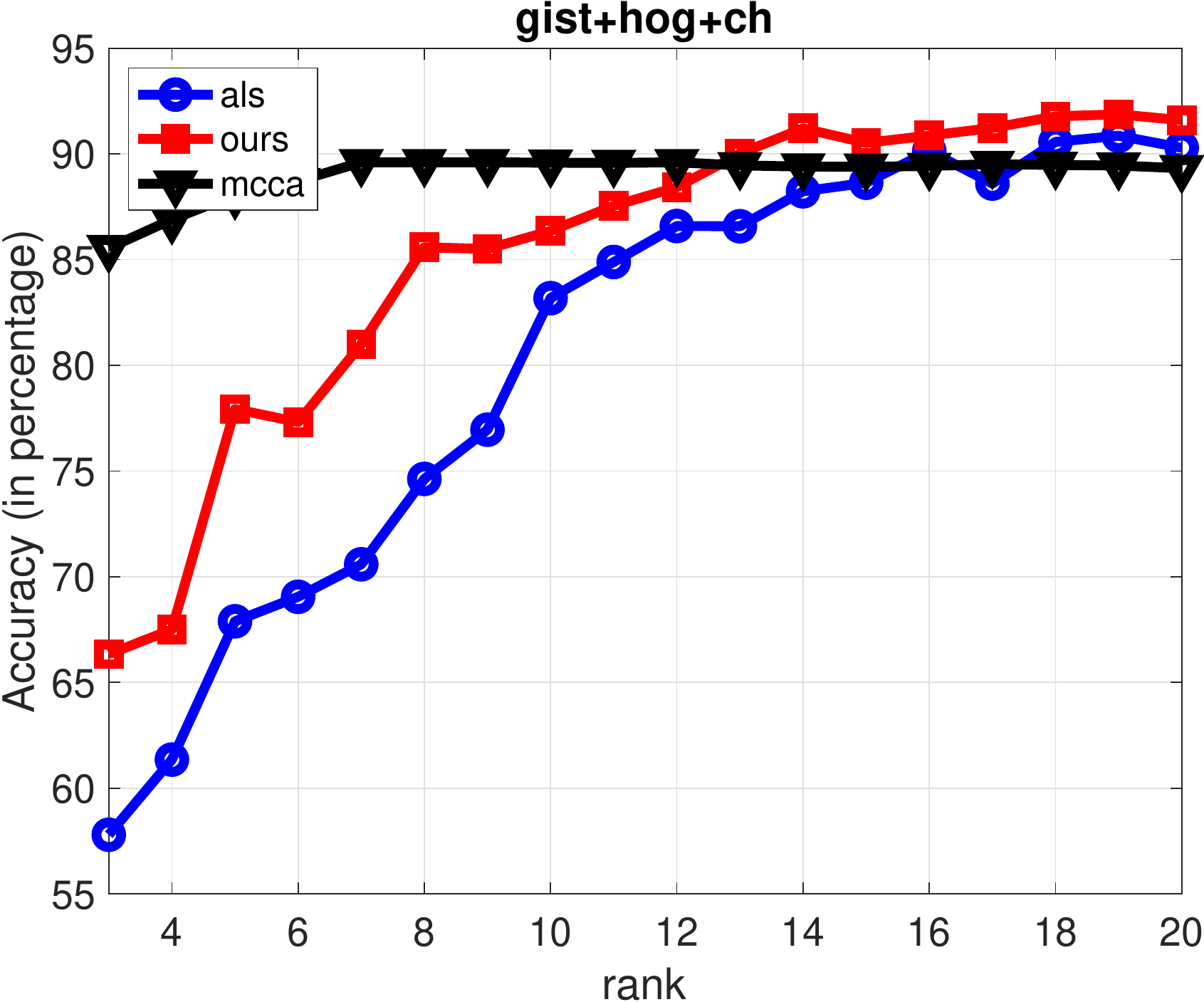}  &
		\includegraphics[width=0.31\textwidth]{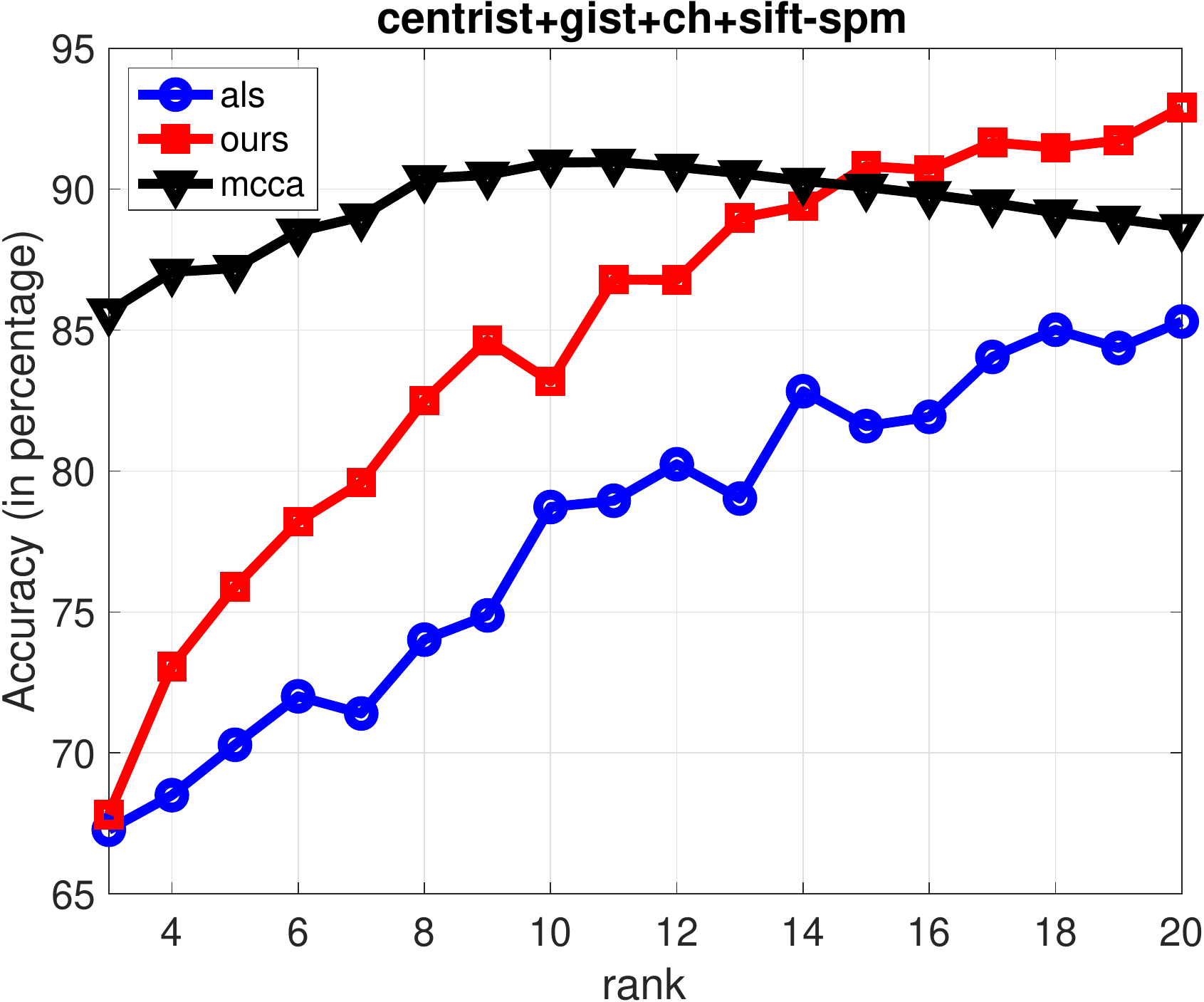}  &		 \includegraphics[width=0.31\textwidth]{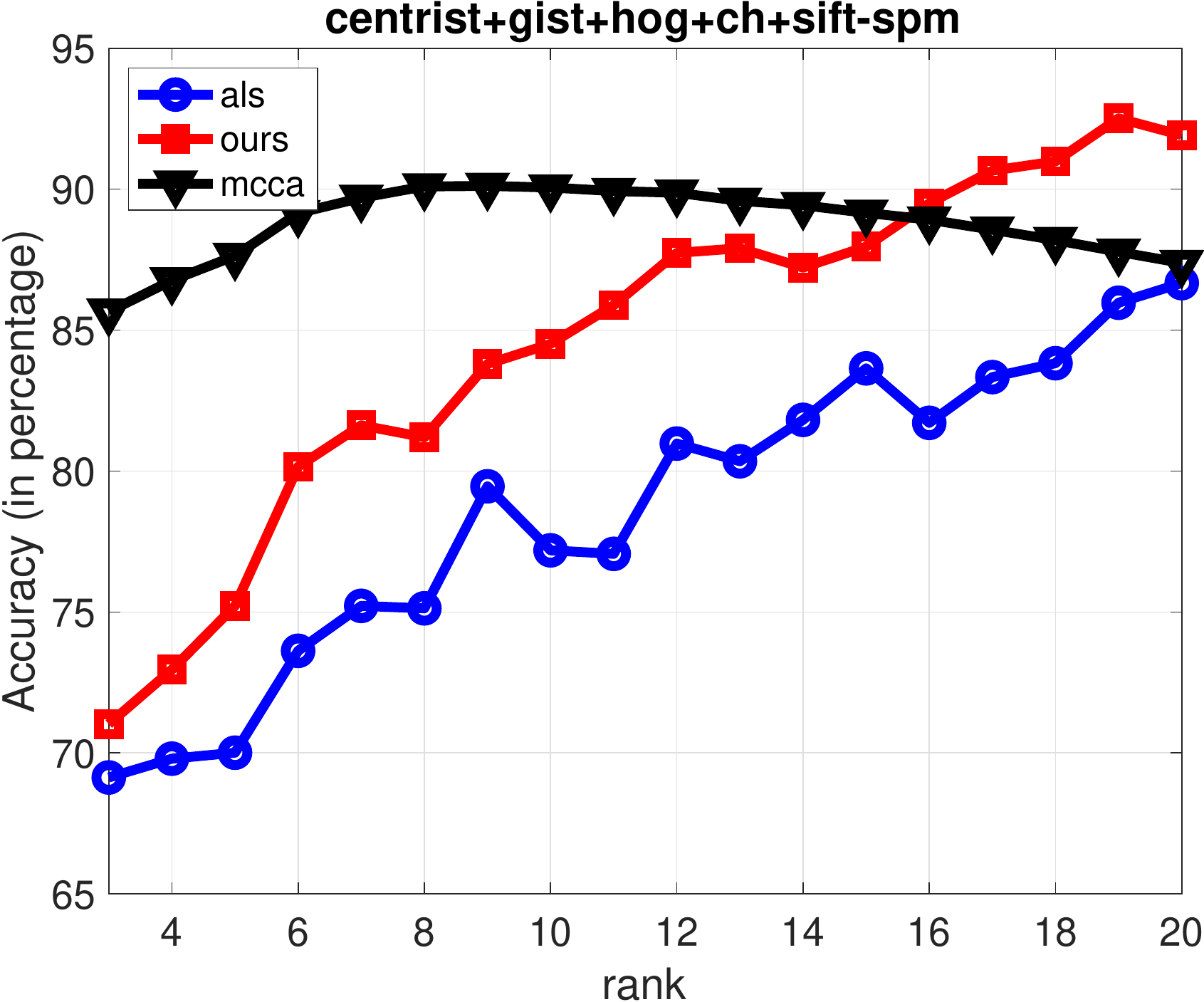} \\
	\end{tabular}
	\caption{Experimental results of compared methods on Caltech101-7 with three, four and five views over 30\% training data. } \label{fig:views}
\end{figure}

We further investigate the impact of compared methods
in terms of the varied ranks and the training ratios.
In Figure~\ref{fig:sense-caltech}(a),
the mean accuracy of testing data obtained by three compared methods varies
when the training ratio increases from 10\% to 70\%.
Due to the complexity of the tensor approximation problem,
both als and our method show larger fluctuations
than that of mcca when the training ratio increases.
However, our method consistently outperforms
both als and mcca over all tested training ratios.
In Figure~\ref{fig:sense-caltech}(b),
we show the mean accuracy of compared methods on 10 random splits with
30\% training data by varying rank from 3 to 20 on six views.
In addition, we show the mean accuracy of
compared methods on varied ranks on 30\%
with respect to combinations of different views in Figure \ref{fig:views}.
All these results demonstrate a similar trend with respect to testing accuracy
when the rank increases from 3 to 20: mcca shows better performance on small ranks,
but our method outperforms both mcca and als on large ranks, and overall our method obtains the best performance over all tested ranks.
From Figure~\ref{fig:sense-caltech}, we can see that our model on 30\%
training data shows the worst results
comparing to other training ratios.
This implies that the results in Figure \ref{fig:sense-caltech}
show the worst results of our method,
which still outperforms
the other two methods as shown in Figure \ref{fig:views}.
{Moreover, we report the empirical comparison of computational time 
	for these three methods, according to values of
	ranks and views on Caltech101-7.
	The comparison is shown in Table~\ref{tab:time}.
	For cleanness, the average CPU time over the combinations 
	of fixed numbers of views is reported.
	As the experiments show, mcca is the fastest one, 
	because the generalized eigenvalue decomposition
	for matrices of  size $20 \times v$ ($v \in \{3, 4, 5, 6\}$)
	can be very fast. Our method is slower than mcca and als.
	As the number of views increases,
	the size of tensor increases and our method becomes slower.}
\begin{table}[!h]
	\caption{The CPU time of three compared methods on Caltech101-7 in terms of both ranks and views.}
	\label{tab:time}
	\centering
	\begin{small}
		\begin{tabular}{@{}c|ccc|ccc|ccc@{}} \hline
			\multirow{2}{*}{rank}  & \multicolumn{3}{c}{3 views} &
			\multicolumn{3}{c}{4 views} & \multicolumn{3}{c}{5 views}  \\ \cline{2-10}
			& mcca & als & Alg.~\ref{Algorithm-TCCA} & mcca & als & Alg.~\ref{Algorithm-TCCA}
			& mcca & als & Alg.~\ref{Algorithm-TCCA} \\  \hline
			3 & 0.0016& 0.0464& 0.0578& 0.0022& 0.0520& 0.0863& 0.0040& 0.2463& 0.7044 \\
			4 & 0.0013& 0.0450& 0.0656& 0.0022& 0.0517& 0.0916& 0.0037& 0.2388& 0.8510 \\
			5 & 0.0014& 0.0464& 0.0713& 0.0021& 0.0537& 0.1216& 0.0040& 0.2434& 1.0633 \\
			6 & 0.0013& 0.0459& 0.0781& 0.0021& 0.0522& 0.1431& 0.0041& 0.2461& 1.2318 \\
			7 & 0.0013& 0.0464& 0.0797& 0.0022& 0.0537& 0.1519& 0.0036& 0.2435& 1.3059 \\
			8 & 0.0013& 0.0478& 0.0843& 0.0020& 0.0520& 0.1578& 0.0037& 0.2563& 1.3738 \\
			9 & 0.0014& 0.0454& 0.0942& 0.0022& 0.0536& 0.1665& 0.0035& 0.2556& 1.4235 \\
			10 & 0.0013& 0.0461& 0.0961& 0.0023& 0.0528& 0.1778& 0.0036& 0.2609& 1.6305 \\ \hline
		\end{tabular}
	\end{small}
\end{table}

\subsection{Experiments on Scene15}

The experiments  same as in section \ref{sec:exp-cal} are performed on data Scene15. In Table \ref{tab:combination-2}, tensor-based methods including both als and ours outperform mcca on all view combinations. Our method outperforms als on three and four views, while it is competitive to als on five views. Figure \ref{fig:sense-scene15} demonstrates the sensitivity of compared methods by varying the rank and the training ratios. On data Scene15, the tensor-based methods are consistently better than mcca over all tested training ratios, while our method outperforms als on large ranks and is competitive on small ranks. These results are consistent with the observations on Caltech101-7 in section \ref{sec:exp-cal}, especially on relatively large ranks.

\begin{table}[t]
	\caption{Mean accuracy and standard deviation of two compared methods on $16$ data sets generated from Scene15 over $10$ random splits with 30\% training data and rank $20$. } \label{tab:combination-2}
	\centering
	\begin{tabular}{lccc}
		\hline
		views & als & ours & mcca \\
		\hline\hline		
		centrist+gist+lbp & 65.91 $\pm$ 1.45 & \textbf{66.66 $\pm$ 1.12} & 57.01 $\pm$ 0.81\\
		centrist+gist+hog & 67.21 $\pm$ 1.15 & \textbf{67.73 $\pm$ 1.08} & 58.47 $\pm$ 0.88\\
		centrist+gist+sift-spm & 70.40 $\pm$ 1.48 & \textbf{72.68 $\pm$ 1.70} & 64.34 $\pm$ 1.65\\
		centrist+lbp+hog & 60.29 $\pm$ 1.80 & \textbf{62.32 $\pm$ 1.56} & 53.63 $\pm$ 0.72\\
		centrist+lbp+sift-spm & 60.30 $\pm$ 1.83 & \textbf{65.43 $\pm$ 1.45} & 58.10 $\pm$ 1.46\\
		centrist+hog+sift-spm & 63.05 $\pm$ 1.98 & \textbf{67.45 $\pm$ 1.24} & 58.11 $\pm$ 1.24\\
		gist+lbp+hog & 61.08 $\pm$ 1.33 & \textbf{62.86 $\pm$ 1.20} & 54.20 $\pm$ 0.74\\
		gist+lbp+sift-spm & 65.82 $\pm$ 1.68 & \textbf{68.88 $\pm$ 1.30} & 59.23 $\pm$ 1.42\\
		gist+hog+sift-spm & 63.12 $\pm$ 3.94 & \textbf{67.13 $\pm$ 2.28} & 54.71 $\pm$ 1.76\\
		lbp+hog+sift-spm & 57.08 $\pm$ 2.54 & \textbf{60.58 $\pm$ 1.32} & 51.01 $\pm$ 2.05\\
		
		\hline\hline
		centrist+gist+lbp+hog & 62.33 $\pm$ 2.21 & \textbf{63.83 $\pm$ 2.11} & 51.07 $\pm$ 0.72\\
		centrist+gist+lbp+sift-spm & 61.96 $\pm$ 3.02 & \textbf{65.34 $\pm$ 1.62} & 55.56 $\pm$ 1.08\\
		centrist+gist+hog+sift-spm & 62.41 $\pm$ 3.08 & \textbf{63.67 $\pm$ 3.70} & 58.69 $\pm$ 1.19\\
		centrist+lbp+hog+sift-spm & 54.63 $\pm$ 3.26 & \textbf{57.98 $\pm$ 2.37} & 51.77 $\pm$ 1.57\\
		gist+lbp+hog+sift-spm & 58.93 $\pm$ 3.18 & \textbf{61.28 $\pm$ 2.26} & 50.05 $\pm$ 1.15\\
		
		\hline\hline
		centrist+gist+lbp+hog+sift-spm & \textbf{60.72 $\pm$ 2.87} & 60.35 $\pm$ 3.33 & 49.11 $\pm$ 1.49\\		
		\hline
	\end{tabular}
\end{table}

\begin{figure}[t]
	\centering
	\begin{tabular}{ccc}
		\includegraphics[width=0.3\textwidth]{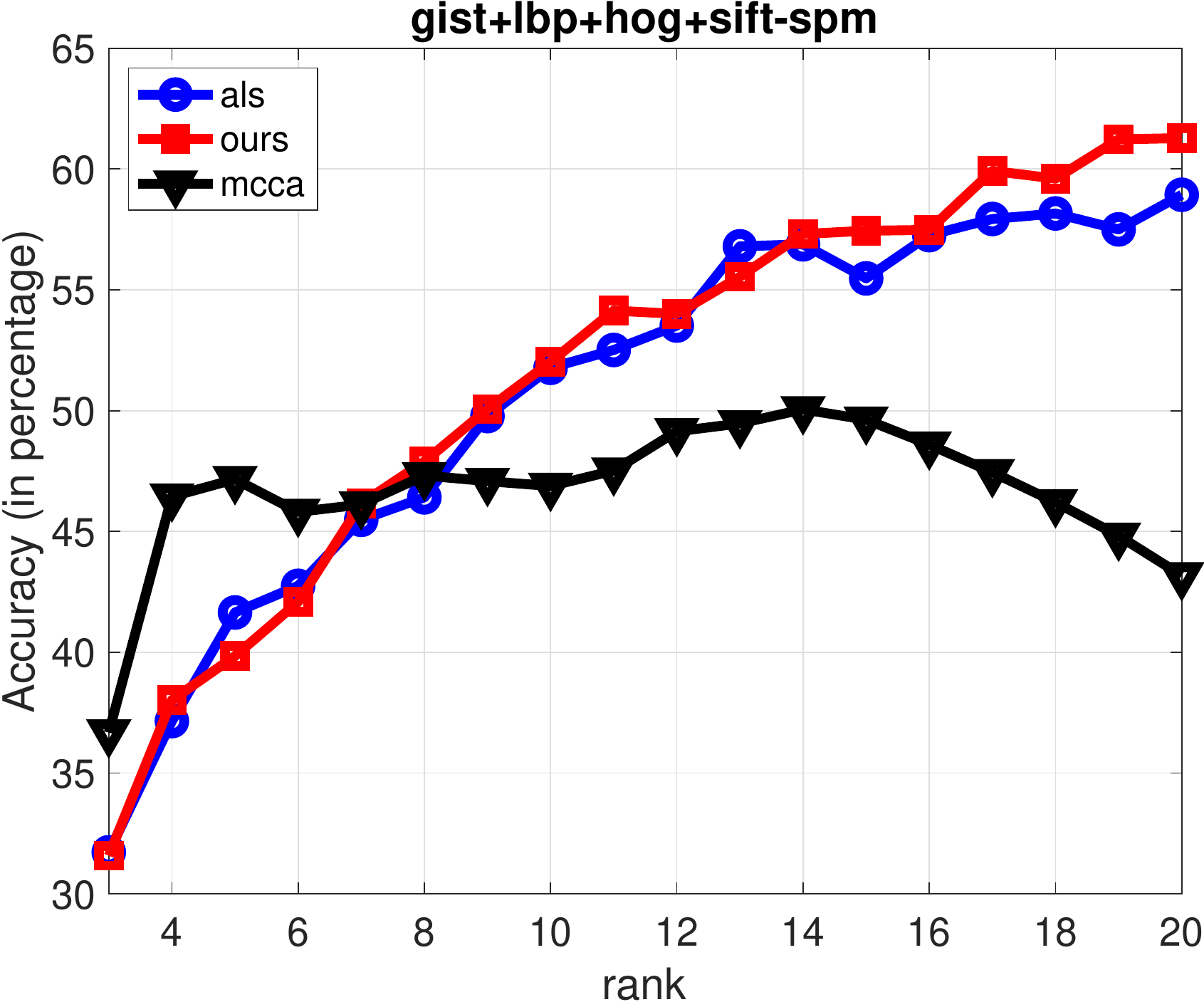}  & \includegraphics[width=0.3\textwidth]{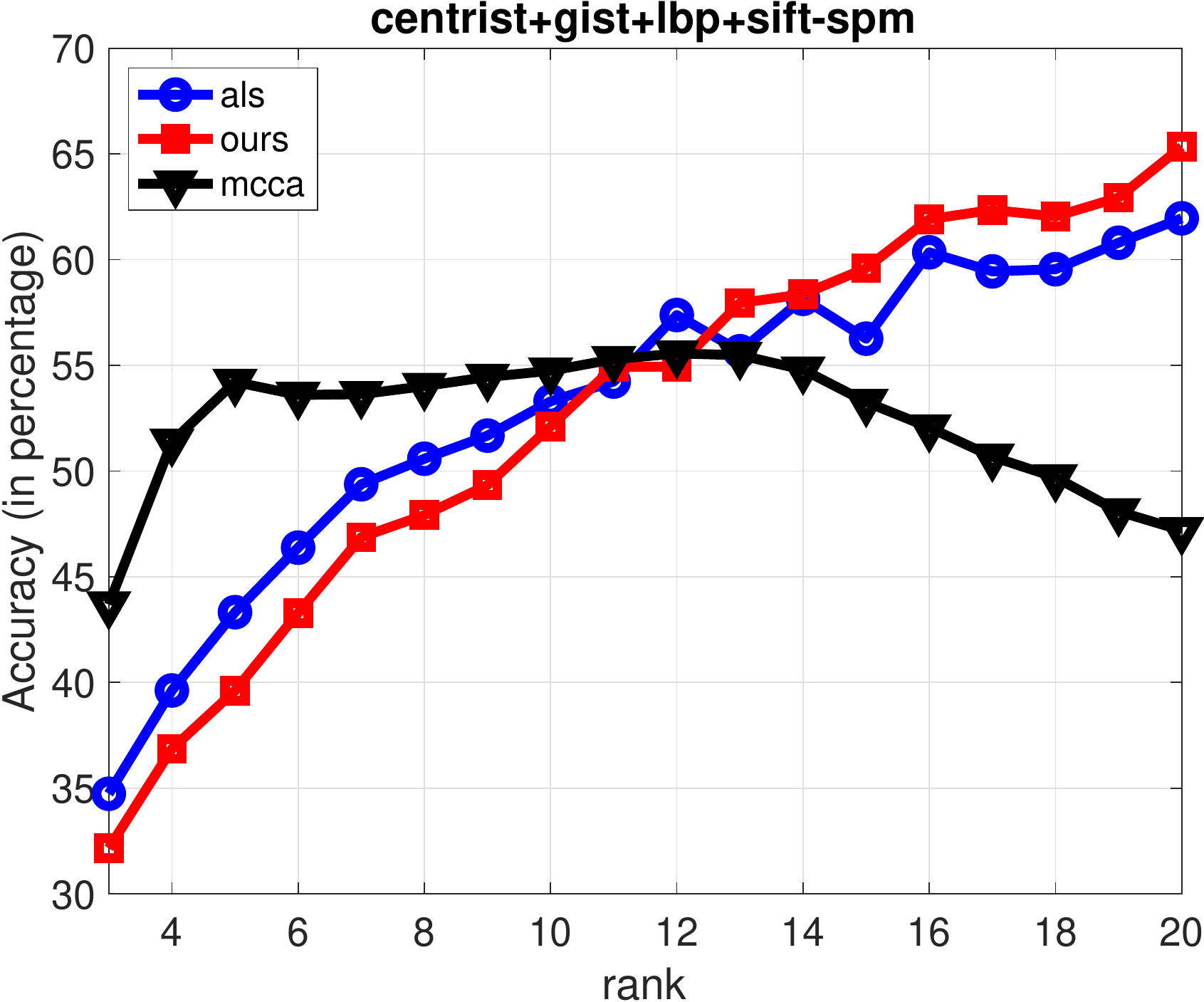} & \includegraphics[width=0.3\textwidth]{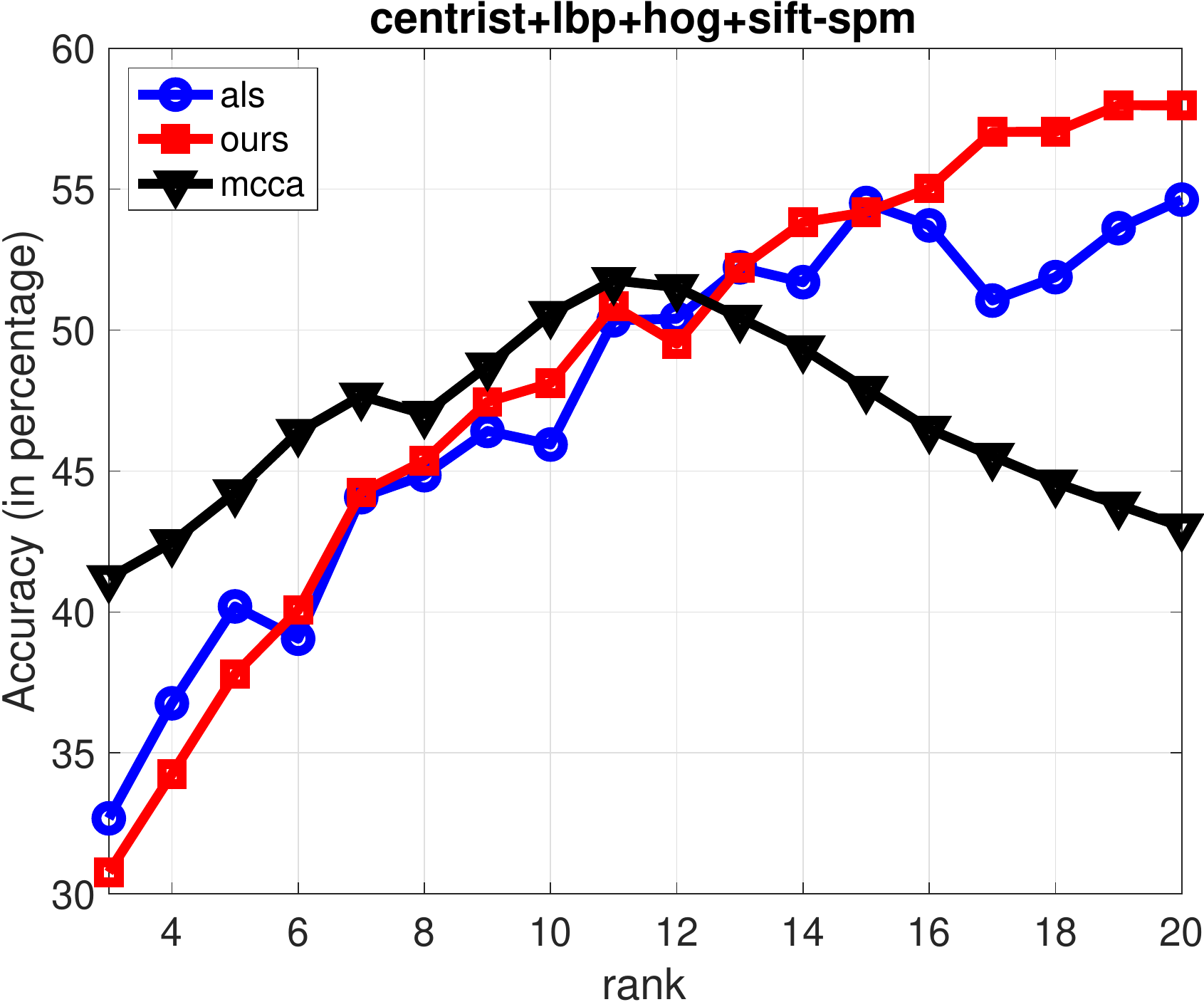}\\
		\includegraphics[width=0.3\textwidth]{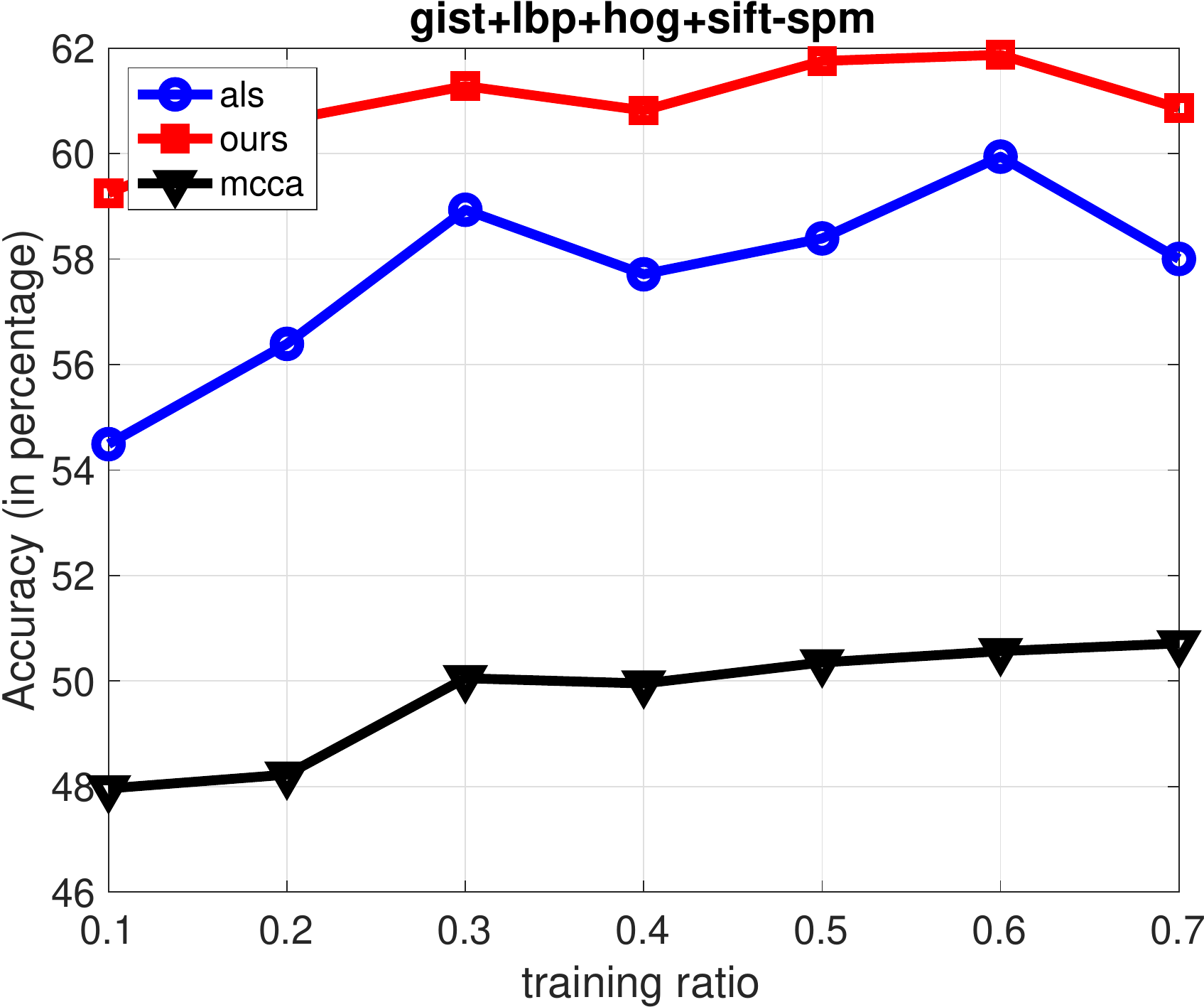} &
		\includegraphics[width=0.3\textwidth]{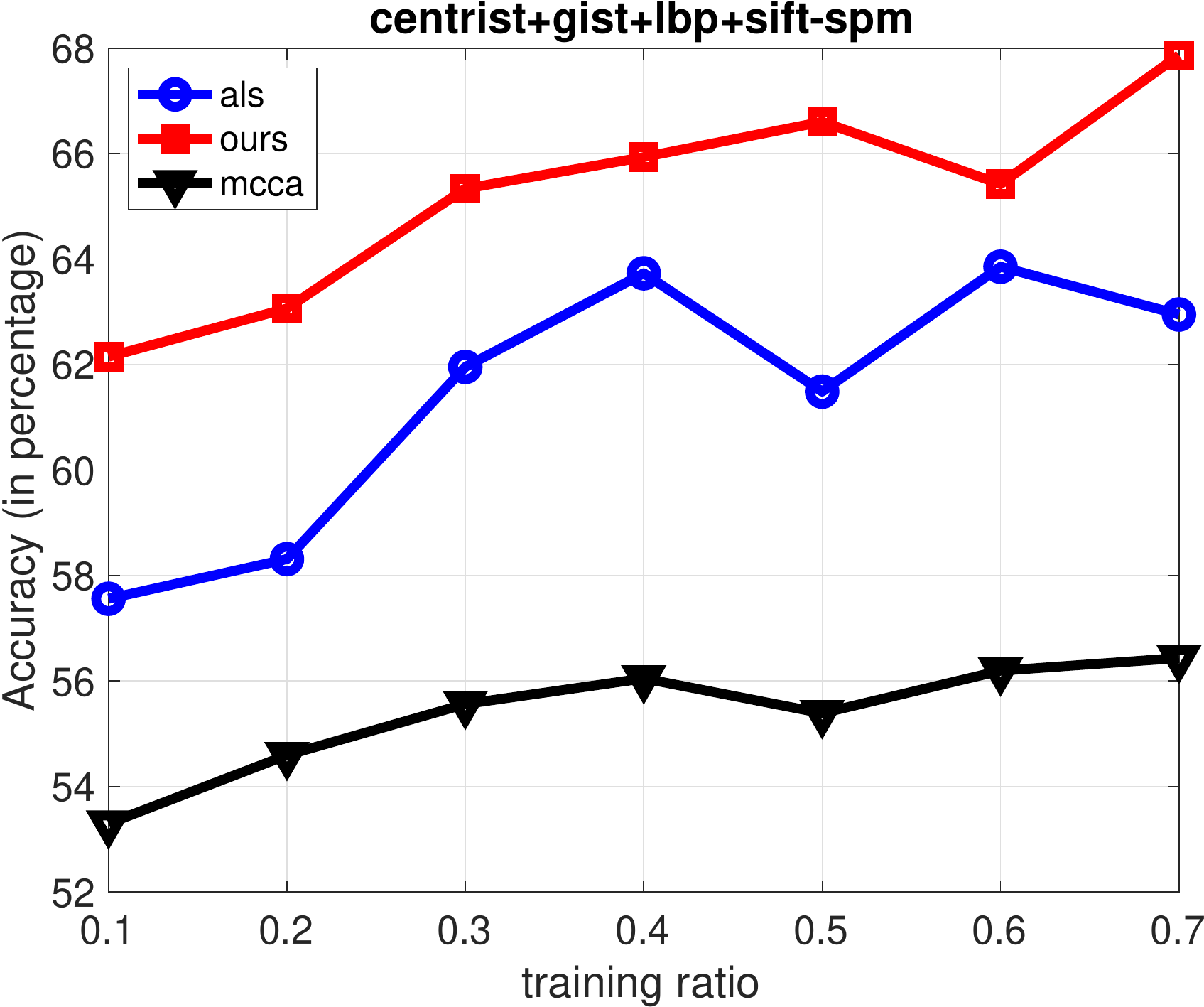} &
		\includegraphics[width=0.3\textwidth]{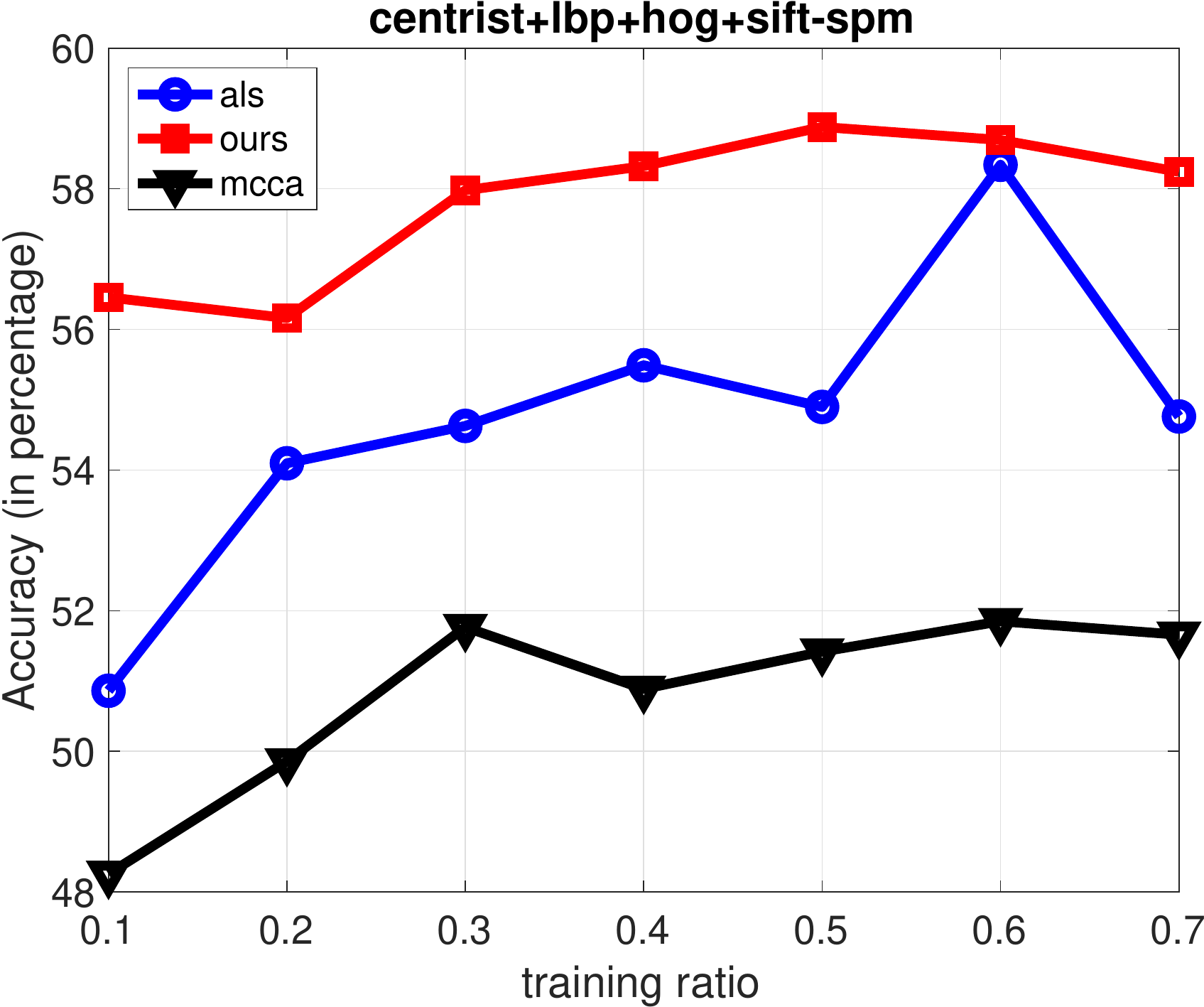}\\
	\end{tabular}
	\caption{Sensitivity analysis of compared methods with four views on data Scene15. (top row) varying the size of common space on 30\% training data; (bottom row) varying the training ratios over common spaces from 3 to 20.} \label{fig:sense-scene15}
\end{figure}

\bigskip \noindent
{\bf Acknowledgement}
The first author is partially supported by the NSF grant
DMS-2110780. The second author is partially supported by the NSF grant DMS-2009689.

%\bibliographystyle{plain}
%% argument is your BibTeX string definitions and bibliography database(s)
%\bibliography{pyMCCA} 

\begin{thebibliography}{99}
	
	
	
	\bibitem{andrew2013deep}
	Galen Andrew, Raman Arora, Jeff Bilmes, and Karen Livescu.
	\newblock Deep canonical correlation analysis.
	\newblock In {\em International Conference on Machine Learning}, pages
	1247--1255. PMLR, 2013.
	
	
	\bibitem{brandi2020unveil}
	Giuseppe Brandi, Ruggero Gramatica, and Tiziana Di~Matteo.
	\newblock Unveil stock correlation via a new tensor-based decomposition method.
	\newblock {\em Journal of Computational Science}, 46, 101116, 2020.
	
	
	\bibitem{BreVan18}
	Paul Breiding and Nick Vannieuwenhoven.
	{\em A Riemannian trust region method for the canonical tensor rank approximation problem},
	\newblock {\em SIAM Journal on Optimization}, 28(3), 2435--2465, 2018.
	
	
	
	\bibitem{chang2011libsvm}
	Chih-Chung Chang and Chih-Jen Lin.
	\newblock Libsvm: a library for support vector machines.
	\newblock {\em ACM Transactions on Intelligent Systems and Technology (TIST)},
	2(3), 1--27, 2011.
	
	
	\bibitem{comon2009tensor}
	Pierre Comon.
	\newblock Tensor decompositions, state of the art and applications.
	\newblock {\em arXiv preprint arXiv:0905.0454}, 2009.
	
	
	\bibitem{comon2011sparse}
	Pierre Comon and Lek-Heng Lim.
	\newblock Sparse representations and low-rank tensor approximation.
	\newblock 2011.
	
	
	\bibitem{CDN14}
	Chun-Feng Cui, Yu-Hong Dai, Jiawang Nie,
	\newblock  All real eigenvalues of symmetric tensors.
	\newblock {\em SIAM Journal on Matrix Analysis and Applications},
	35(4), 1582--1601, 2014.
	
	
	
	
	\bibitem{dao2020multi}
	Nguyen Thi~Anh Dao, Nguyen~Viet Dung, Nguyen~Linh Trung, Karim Abed-Meraim,
	et~al.
	\newblock Multi-channel eeg epileptic spike detection by a new method of tensor
	decomposition.
	\newblock {\em Journal of Neural Engineering}, 17(1), 016023, 2020.
	
	\bibitem{davidson2013network}
	Ian Davidson, Sean Gilpin, Owen Carmichael, and Peter Walker.
	\newblock Network discovery via constrained tensor analysis of fmri data.
	\newblock In {\em Proceedings of the 19th ACM SIGKDD International Conference
		on Knowledge Discovery and Data Mining}, pages 194--202, 2013.
	
	
	\bibitem{dLa06}
	Lieven~De~Lathauwer,
	A link between the canonical decomposition in
	multilinear algebra and simultaneous matrix diagonalization.
	\newblock {\em SIAM Journal on Matrix Analysis and Applications}, 28(3), 642--666, 2006.
	
	\bibitem{Lathauwer2000}
	Lieven De Lathauwer, Bart De Moor, and Joos Vandewalle.
	\newblock A multilinear singular value decomposition.
	\newblock {\em SIAM Journal on Matrix Analysis and Applications}, 21(4), 1253-1278, 2000.
	
	
	\bibitem{dLMV04}
	Lieven~De~Lathauwer, Bart~De~Moor, and Joos~Vandewalle,
	Computation of the canonical decomposition by means
	of a simultaneous generalized Schur decomposition.
	\newblock {\em SIAM Journal on Matrix Analysis and Applications}, 26(2), 295--327, 2004.
	
	
	\bibitem{de2000best}
	Lieven De~Lathauwer, Bart De~Moor, and Joos Vandewalle.
	\newblock On the best rank-1 and rank-$(r_1, r_2,..., r_n)$ approximation of
	higher-order tensors.
	\newblock {\em SIAM Journal on Matrix Analysis and Applications},
	21(4), 1324--1342, 2000.
	
	
	\bibitem{Domanov2014}
	Ignat Domanov and Lieven~De Lathauwer.
	\newblock Canonical polyadic decomposition of third-order tensors: Reduction to
	generalized eigenvalue decomposition.
	\newblock {\em SIAM Journal on Matrix Analysis and Applications},
	35(2), 636–660, 2014.
	
	
	\bibitem{FNZ18}
	Jinyan Fan, Jiawang Nie and Anwa Zhou,
	Tensor eigenvalue complementarity problems.
	\emph{Mathematical Programming},
	170(2),  507--539, 2018.
	
	
	
	\bibitem{friedland2014number}
	Shmuel Friedland and Giorgio Ottaviani.
	\newblock The number of singular vector tuples and uniqueness of best rank-one
	approximation of tensors.
	\newblock {\em Foundations of Computational Mathematics}, 14(6), 1209--1242,
	2014.
	
	
	\bibitem{guan2018convergence}
	Yu~Guan, Moody~T Chu, and Delin Chu.
	\newblock Convergence analysis of an svd-based algorithm for the best rank-1
	tensor approximation.
	\newblock {\em Linear Algebra and its Applications}, 555, 53--69, 2018.
	
	
	
	
	\bibitem{guo2021learning}
	Bingni Guo, Jiawang Nie, and Zi~Yang.
	\newblock Learning diagonal gaussian mixture models and incomplete tensor
	decompositions.
	\newblock {\em   Vietnam Journal of Mathematics}, to appear, 2021.
	
	
	
	\bibitem{hardoon2011sparse}
	David~R Hardoon and John Shawe-Taylor.
	\newblock Sparse canonical correlation analysis.
	\newblock {\em Machine Learning}, 83(3), 331--353, 2011.
	
	
	
	\bibitem{harold1936relations}
	Hotelling Harold.
	\newblock Relations between two sets of variates.
	\newblock {\em Biometrika}, 28(3/4):321--377, 1936.
	
	
	
	
	\bibitem{hillar2013most}
	Christopher~J Hillar and Lek-Heng Lim.
	\newblock Most tensor problems are np-hard.
	\newblock {\em Journal of the ACM (JACM)}, 60(6), 1--39, 2013.
	
	\bibitem{pca2002}
	Ian T. Jolliffe
	\newblock {\em Principal Component Analysis}. Springer Series in Statistics. New York: Springer-Verlag. 2002.
	
	\bibitem{KoBa09}
	Tamara~G. Kolda and Brett~W. Bader.
	\newblock Tensor decompositions and applications.
	\newblock {\em SIAM Review}, 51(3), 455--500, September 2009.
	
	
	
	
	\bibitem{Land12}
	Joseph~Landsberg,
	{\em Tensors: Geometry and Applications},
	Grad. Stud. Math., Providence, 2012.
	
	
	\bibitem{larsen2020practical}
	Brett~W. Larsen and Tamara~G. Kolda.
	\newblock Practical leverage-based sampling for low-rank tensor decomposition,
	2020.
	
	
	\bibitem{lazebnik2006beyond}
	Svetlana Lazebnik, Cordelia Schmid, and Jean Ponce.
	\newblock Beyond bags of features: Spatial pyramid matching for recognizing
	natural scene categories.
	\newblock In {\em 2006 IEEE Computer Society Conference on Computer Vision and
		Pattern Recognition (CVPR'06)}, pages 2169--2178. IEEE, 2006.
	
	
	
	
	
	\bibitem{fei2007learning}
	Fei-Fei Li, Rob Fergus, and Pietro Perona.
	\newblock Learning generative visual models from few training examples: An
	incremental bayesian approach tested on 101 object categories.
	\newblock {\em Computer Vision and Image Understanding}, 106(1), 59--70, 2007.
	
	
	
	
	
	\bibitem{Lim13}
	Lek-Heng~Lim,
	{\em Tensors and hypermatrices, in: L. Hogben (Ed.)},
	Handbook of linear algebra, 2nd Ed.,
	CRC Press, Boca Raton, 2013.
	
	
	
	
	\bibitem{luo2015tensor}
	Yong Luo, Dacheng Tao, Kotagiri Ramamohanarao, Chao Xu, and Yonggang Wen.
	\newblock Tensor canonical correlation analysis for multi-view dimension
	reduction.
	\newblock {\em IEEE transactions on Knowledge and Data Engineering},
	27(11), 3111--3124, 2015.
	
	
	\bibitem{GPSTD}
	Jiawang Nie.
	\newblock  Generating polynomials and symmetric tensor decompositions.
	\newblock {\em Foundations of Computational Mathematics},
	17(2), 423--465, 2017.
	
	\bibitem{LRSTA17}
	Jiawang Nie.
	\newblock  Low rank symmetric tensor approximations.
	\newblock {\em SIAM Journal on Matrix Analysis and Applications},
	38(4), 1517--1540, 2017.
	
	\bibitem{nie2014semidefinite}
	Jiawang Nie and Li~Wang.
	\newblock Semidefinite relaxations for best rank-1 tensor approximations.
	\newblock {\em SIAM Journal on Matrix Analysis and Applications},
	35(3),1155--1179, 2014.
	
	\bibitem{nie2020hermitian}
	Jiawang Nie and Zi~Yang.
	\newblock Hermitian tensor decompositions.
	\newblock {\em SIAM Journal on Matrix Analysis and Applications},
	41(3), 1115--1144, 2020.
	
	\bibitem{NYZ18}
	\newblock  Jiawang Nie, Zi Yang and Xinzhen Zhang.
	A complete semidefinite algorithm for detecting copositive matrices and tensors.
	\newblock  \emph{SIAM Journal on Optimization},
	28(4), 2902--2921, 2018.
	
	\bibitem{nielsen2002multiset}
	Allan~Aasbjerg Nielsen.
	\newblock Multiset canonical correlations analysis and multispectral, truly
	multitemporal remote sensing data.
	\newblock {\em IEEE Transactions on Image Processing}, 11(3), 293--305, 2002.
	
	
	\bibitem{oliva2001modeling}
	Aude Oliva and Antonio Torralba.
	\newblock Modeling the shape of the scene: A holistic representation of the
	spatial envelope.
	\newblock {\em International Journal of Computer Vision}, 42(3), 145--175, 2001.
	
	\bibitem{ojala2002multiresolution}
	Timo Ojala, Matti Pietik{\"a}inen, and Topi M{\"a}enp{\"a}{\"a}.
	\newblock Multiresolution gray-scale and rotation invariant texture
	classification with local binary patterns.
	\newblock {\em IEEE Transactions on Pattern Analysis and Machine Intelligence},
	24(7), 971--987, 2002.
	
	
	
	\bibitem{phan2013fast}
	Anh-Huy Phan, Petr Tichavsk{\`y}, and Andrzej Cichocki.
	\newblock Fast alternating ls algorithms for high order candecomp/parafac
	tensor factorizations.
	\newblock {\em IEEE Transactions on Signal Processing}, 61(19), 4834--4846,
	2013.
	
	
	
	
	\bibitem{SvBdL13}
	Laurent Sorber, Marc Van Barel, and Lieven De Lathauwer.
	{\em Optimization-based algorithms for tensor decompositions:
		canonical polyadic decomposition, decomposition in
		rank-$(L_r,L_r,1)$ terms and a new generalization},
	\newblock {\em SIAM Journal on Optimization}, 23(2), 695--720, 2013.
	
	
	
	\bibitem{uurtio2017tutorial}
	Viivi Uurtio, Jo{\~a}o~M Monteiro, Jaz Kandola, John Shawe-Taylor, Delmiro
	Fernandez-Reyes, and Juho Rousu.
	\newblock A tutorial on canonical correlation methods.
	\newblock {\em ACM Computing Surveys (CSUR)}, 50(6), 1--33, 2017.
	
	
	\bibitem{tensorlab}
	Nico Vervliet, Otto Debals, Laurent Sorber, Marc Van Barel, and Lieven De Lathauwer.
	{\em Tensorlab 3.0}, March 2016.
	
	
	
	\bibitem{via2007learning}
	Javier V{\'\i}a, Ignacio Santamar{\'\i}a, and Jes{\'u}s P{\'e}rez.
	\newblock A learning algorithm for adaptive canonical correlation analysis of
	several data sets.
	\newblock {\em Neural Networks}, 20(1), 139--152, 2007.
	
	\bibitem{wu2008place}
	Jianixn Wu and James~M Rehg.
	\newblock Where am i: Place instance and category recognition using spatial
	pact.
	\newblock In {\em 2008 Ieee Conference on Computer Vision and Pattern
		Recognition}, pages 1--8. IEEE, 2008.
	
	
	\bibitem{yang2019survey}
	Xinghao Yang, Liu Weifeng, Wei Liu, and Dacheng Tao.
	\newblock A survey on canonical correlation analysis.
	\newblock {\em IEEE Transactions on Knowledge and Data Engineering}, 33(6), 2349--2368, 2021.
	
	
\end{thebibliography}

\end{document}